\def\eqref#1{equation~\ref{#1}}
\def\1{\bm{1}}
\DeclareMathAlphabet{\mathsfit}{\encodingdefault}{\sfdefault}{m}{sl}
\SetMathAlphabet{\mathsfit}{bold}{\encodingdefault}{\sfdefault}{bx}{n}
\newtheorem{theorem}{Theorem}[section]
\newtheorem{corollary}{Corollary}[theorem]
\newtheorem{lemma}[theorem]{Lemma}
\newtheorem{definition}[theorem]{Definition}
\newcommand{\modelname}{UncertainGen}
\title{\modelname: Uncertainty-Aware Representations of DNA Sequences for Metagenomic Binning}
\author{%
  Abdulkadir \c{C}elikkanat, Andres R. Masegosa, Mads Albertsen \& Thomas D. Nielsen\\
  Aalborg University \\
  9000 Aalborg, Denmark \\
  \texttt{\{abce,arma,ma,tdn\}@cs.aau.dk}
}
\newcommand{\bfK}{\mathbf{K}}
\newcommand{\bfS}{\mathbf{S}}
\newcommand{\bfI}{\mathbf{I}}
\begin{document}

\maketitle

\begin{abstract}
Metagenomic binning aims to cluster DNA fragments from mixed microbial samples into their respective genomes, a critical step for downstream analyses of microbial communities. Existing methods rely on deterministic representations, such as k-mer profiles or embeddings from large language models, which fail to capture the uncertainty inherent in DNA sequences arising from inter-species DNA sharing and from fragments with highly similar representations. We present the first probabilistic embedding approach, UncertainGen, for metagenomic binning, representing each DNA fragment as a probability distribution in latent space. Our approach naturally models sequence-level uncertainty, and we provide theoretical guarantees on embedding distinguishability. This probabilistic embedding framework expands the feasible latent space by introducing a data-adaptive metric, which in turn enables more flexible separation of bins/clusters. Experiments on real metagenomic datasets demonstrate the improvements over deterministic k-mer and LLM-based embeddings for the binning task by offering a scalable and lightweight solution for large-scale metagenomic analysis.
\end{abstract}

\section{Introduction\label{sec:introduction}}







Genomic sequences encode the blueprint of life, and analyzing them is fundamental for understanding biological processes, evolutionary relationships, and microbial ecosystems \citep{falkowskiMicrobialEnginesThat2008,timmisContributionMicrobialBiotechnology2017, cavicchioliScientistsWarningHumanity2019}. In recent years, advances in high-throughput DNA sequencing technologies have enabled large-scale studies of complex microbial communities directly from environmental samples. However, these technologies typically produce fragmented DNA sequences (called \emph{reads}) rather than complete genomes. This fragmentation poses a significant challenge: recovering the full DNA sequences of the microbes in a sample requires assembling these reads and organizing them according to their origin.

The process of organizing reads from a mixed microbial sample is known as \emph{metagenomic binning}, which aims to cluster DNA fragments so that each cluster corresponds to a distinct genome \citep{kuninBioinformaticiansGuideMetagenomics2008}. Accurate binning is critical for downstream analyses, such as functional annotation, phylogenetic profiling, and strain-level variation studies \citep{tempertonMetagenomicsMicrobialDiversity2012,meyerCriticalAssessmentMetagenome2022}. At its core, metagenomic binning relies on a representation of DNA fragments that preserves genomic similarity and inter-species dissimilarity, enabling meaningful comparisons between reads or assembled contiguous sequences (i.e. \emph{contigs}).

Traditionally, DNA sequences are represented using \textit{$k$-mer profiles}, wherein sequences are decomposed into substrings of length $k$ to construct the feature vectors of DNA fragments (see Figure \ref{fig:motivating_example} (a-c)). Numerous methods leverage these $k$-mer–based representations to learn latent representations (i.e., \textit{embeddings}) to later cluster the DNA fragements for metagenomic binning  \citep{teelingApplicationTetranucleotideFrequencies2004,chanBinningSequencesUsing2008,pan2023semibin2,ccelikkanat2024revisiting,ji2021dnabert}. Recent studies employs large language models that operate directly on raw sequences—eschewing explicit $k$-mer feature vectors—to generate embeddings with the aim of capturing richer contextual information \citep{nguyen2023hyenadna,zhou2023dnabert,zhou2024dnabert}. However, recent works also indicate that $k$-mer–based embeddings achieve comparable performance while offering orders-of-magnitude greater computational efficiency than large genome foundation models \citep{ccelikkanat2024revisiting}. 



A shared characteristic of these methods is that they produce \textit{deterministic embeddings}, mapping each DNA fragment to a single fixed point in the embedding space that is subsequently clustered and assigned to a single group representing the reconstructed species’ DNA (Figure \ref{fig:motivating_example} illustrates this process). However, many DNA sequences can appear in multiple genomes—for instance through horizontal (lateral) gene transfer \citep{arnoldHorizontalGeneTransfer2022}—and should ideally be assigned to their correct clusters. But this is impossible for any clustering algorithm because such sequences will be represented by the same point in the embedding space. A further limitation arises with $k$-mer–based representations: distinct sequences, potentially belonging to different clusters, can exhibit highly similar $k$-mer profiles and are therefore projected to similar embedding vectors, making it very difficult for the clustering algorithm to assign them accurately. Figure \ref{fig:motivating_example} visually illustrates this point.

\begin{figure}[t]
\centering
\includegraphics[width=\textwidth]{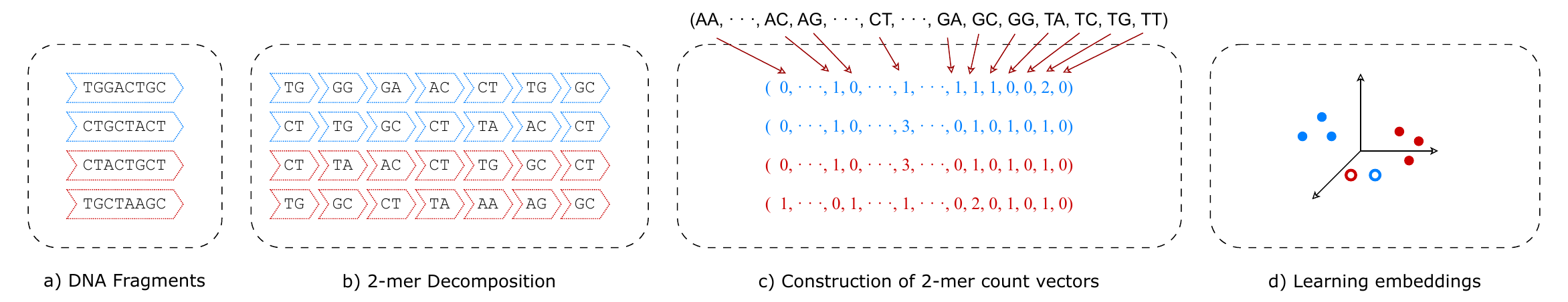}
\caption{\textbf{Illustration of the metagenomic binning process}. Starting from a set of DNA sequences (a), the process ends with their two-dimensional embeddings derived from 2-mer profiles (d). In general, these embeddings allow the DNA fragments from two different species to be correctly clustered. However, the second and third DNA sequences in (a) pose an exception: although distinct, their $k$-mer representations shown in (c) are highly similar and, consequently, their embeddings are also very close (shown as the two empty circled points in (d)). Because the k-mer profiles of DNA within a species tend to be (locally) similar, the contrastive learning procedure attempts to position such fragments in both clusters but, since this is not possible, ultimately places them between them.}
\label{fig:motivating_example}
\end{figure}

In this work, we provide a mathematical formalization of the above issues and show that \textit{deterministic embeddings} cannot resolve them. To address this limitation, we propose the use of \textit{probabilistic embeddings} \citep{shiProbabilisticFaceEmbeddings2019a,warburgBayesianMetricLearning2023,karpukhinProbabilisticEmbeddingsRevisited2024}, where each fragment is mapped not to a single point but to a distribution (i.e., a region) in the embedding space. These distributions explicitly encode the ambiguity of fragments that may belong to multiple clusters and, more specifically, capture the uncertainty that a given $k$-mer profile can belong to multiple speies.

Previous work on probabilistic embeddings in computer vision, NLP, and graph representation learning have typically relied on heuristically selected distributional distances, such as the Kullback–Leibler or Wasserstein divergence to compare objects \citep{muzellec2018generalizing}. In contrast, our framework employs a contrastive-learning formulation in which non-Euclidean distances between embeddings emerge naturally from a probability distribution defined over the embedding space. This formulation yields closed-form expressions for the expected pairwise likelihood, enabling efficient and scalable optimization. Moreover, we present a theoretical analysis identifying the types of sequences producing large covariance terms, thereby offering insight into how and why the model captures uncertainty arising from ambiguous or multi-class sequences.


To the best of our knowledge, we propose the first framework for probabilistic embeddings of DNA sequences for the metagenomic binning task that extends the $k$-mer-based representation approaches  \citep{ccelikkanat2024revisiting,pan2023semibin2}. We provide a scalable approach, \textsc{\modelname}, for embedding DNA sequences, offering both theoretical insights and practical performance gains.

\begin{itemize}[left=5pt, labelsep=0.5em, itemsep=2pt, parsep=0pt, topsep=0pt]
\item We introduce a novel probabilistic sequence embedding framework for metagenomic binning that  models the uncertainty inherent in DNA sequences arising from inter-species DNA sharing and from fragments with highly similar representations
\item We derive theoretical guarantees on the distinguishability of both deterministic and probabilistic embeddings, showing how the probabilistic embeddings expand the feasible latent space and improve the model’s capacity to separate DNA fragments.
\item We empirically demonstrate the effectiveness of our approach on real metagenomic datasets, showing improvements over deterministic k-mer and LLM-based embeddings.
\end{itemize}


\section{Related Works\label{sec:related_works}}



\textbf{Embedding models for DNA sequences.} Representations of DNA sequences have advanced rapidly in recent years. Classical approaches are based on $k$-mer profiles (frequency vectors of length-$k$ substrings) \citep{wu2016maxbin,kang2019metabat}, and many practical binners continue to rely on such features \citep{nissen2021improved,wang2024effective,kutuzova2024binning}. More recently, the field has seen a surge of genome foundation models that adapt transformers or other long-context architectures to genomic data. \textsc{DNABERT} \citep{ji2021dnabert} introduced a BERT-style encoder with $k$-mer tokenization, while \textsc{DNABERT-2} \citep{zhou2023dnabert} replaced fixed $k$-mers with byte-pair encoding (BPE) to improve efficiency and downstream performance. \textsc{DNABERT-S} \citep{zhou2024dnabert} further refined this line of work by introducing a curriculum contrastive learning strategy with manifold instance mixup loss to address the metagenomic binning task. \textsc{HyenaDNA} \citep{nguyen2023hyenadna} extended the context window further by modeling single-nucleotide tokens with a long-range convolutional architecture, reducing the cost of dense attention. Other approaches explore alternative geometries, such as \textsc{HCNN} \cite{khan2025hyperbolic}, which learns sequence representations in hyperbolic space.

In parallel, lightweight but task-specific models have been developed for metagenomics. \textsc{SemiBin2} \cite{pan2023semibin2} and related methods \cite{wang2024effective} employ self-supervision and contrastive objectives tailored to binning, producing embeddings that cluster effectively by genome of origin. Recent work \citep{ccelikkanat2024revisiting} has revisited the foundations of $k$-mer features, showing both their scalability and the limits of when $k$-mer profiles alone can separate genomes in practice. These results highlight a central trade-off: while foundation models offer expressive, context-aware representations, lightweight contrastive or $k$-mer-based approaches can rival or even outperform them in realistic binning scenarios.

\textbf{Probabilistic embeddings for contrastive learning.} Probabilistic embeddings have previously been explored, both generally \citep{warburgBayesianMetricLearning2023,karpukhinProbabilisticEmbeddingsRevisited2024, bansalUnderstandingSelfSupervisedLearning2025} and within a task specific context \citep{vilnisWordRepresentationsGaussian2015,shiProbabilisticFaceEmbeddings2019a}. For example, in the context of face embeddings, \cite{shiProbabilisticFaceEmbeddings2019a} represents each images as a multivariate Gaussian distribution in embedding space, where a mutual likelihood score (MLS) is used to capture the likelihood of pairs of images belonging to the same person. \cite{shiProbabilisticFaceEmbeddings2019a} (Proposition 1) show that the proposed MLS score  with fixed variance terms in the embedding space corresponds to a scaled and shifted negative squared Euclidean distance. Our results (Corollary~\ref{corollary:limitations}) extends Proposition 1\citep{shiProbabilisticFaceEmbeddings2019a} by characterizing the (limited) expressivity of the equivalent of a fixed variance MLS score, while at the same time also showing that modeling capacity can be increased by allowing for varying covariance terms.            

Probabilistic embedding have also been explored in a variational context. For instance,  \cite{ohModelingUncertaintyHedged2019} learns probabilistic embeddings using a soft contrastive loss while relying on a variational information bottleneck principle for optimization \citep{alemiDeepVariationalInformation2017}. \cite{jeongProbabilisticVariationalContrastive2025} reinterprets the InfoNCE loss as a reconstruction term in the ELBO objective through an approximation of the decoder model, which effectively also makes the representation decoder free. 
\cite{kirchhofProbabilisticContrastiveLearning2023} posits a contrastive generative process and extends the InfoNCE loss to learn the correct posterior embedding distribution in latent space (up to rotation) for an unbounded number of negative samples; the correctness result relies on a known concentration parameter of the generative process for the positive samples. In contrast, we provide expressivity results related to model capacity, independent of any model specific parameters defining the data generating process.


\section{Proposed Model\label{sec:model}}


Let $\mathcal{S} \subset \Sigma^L$ be the set of sequences of length $L < \infty$ over alphabet $\Sigma := \{A,C,G,T\}$. In many genomic sequence clustering tasks, sequences originate from unknown genomes, and only sparse pairwise similarity information is available. In this regard, our goal is to learn an embedding function $\phi$ that captures the underlying cluster structure while also modeling the uncertainty in embedding space. Each cluster is expected to contain DNA fragments belonging to the same species. 

\textbf{Objective:} We aim to learn an embedding function, $\phi$, that maps sequences into a latent space, where the distance reflects cluster membership. Specifically, for a given threshold $\tau > 0$, we require:
\begin{align}
\| \phi(\mathbf{s}) - \phi(\mathbf{r}) \| < \tau \quad \text{if and only if} \quad \ell(\mathbf{s}) = \ell(\mathbf{r}) = k \quad \text{for some $k\in [K]$},
\end{align}
where $\ell(\mathbf{s})$ denotes the cluster label of the DNA sequence $\mathbf{s}\in \mathcal{S}$. 

Therefore, an embedding function satisfying this condition maps sequences from the same cluster close together, and sequences from different clusters remain well separated.

\textbf{Light-Weight Metagenomic Bining:} Our work builds on \cite{ccelikkanat2024revisiting,pan2023semibin2}, which introduced a state-of-the-art metagenomic binning algorithm. These methods achieve competitive accuracy while being several orders of magnitude faster than large genomic foundation models because they construct embeddings from efficient $k$-mer representations rather than using heavy sequence transformers. We adopt this principle of lightweight non-linear embeddings as the starting point for our approach.

In the works of \cite{ccelikkanat2024revisiting,pan2023semibin2}, each DNA sequence in the dataset is split into two equal-length segments to form a \emph{positive} pair, while \emph{negative} pairs are created by combining segments from two distinct sequences chosen at random. For every segment, we compute its $k$-mer profile and pass both profiles through a shared neural network that maps them into an embedding space. The contractive loss used there encourages embeddings of positive pairs to be close and embeddings of negative pairs to be far apart, thereby learning a genome-aware representation without supervision. These embeddings are later clustered with a standard algorithm. All DNA fragments in a cluster are assumed to belong to a single species.  

Formally, let $\mathcal{S}=\{\mathbf{s}_i\}_{i=1}^N$ denote the set of DNA sequences in our dataset, with $\mathbf{s}_i^{(l)}$ and $\mathbf{s}_i^{(r)}$ being the left and right halves of each sequence. We construct triplets
\(
\bigl\{(\mathbf{s}_i^{(l)},\mathbf{s}_j^{(r)},y_{ij})\bigr\}_{(i,j)\in\mathcal{I}},
\)
where $\mathcal{I}$ is the set of sequence index pairs, and $y_{ij}=1$ if the two segments originate from the same sequence (positive) and $y_{ij}=0$ otherwise (negative). The neural network parameters $\Omega$ are trained by minimizing
\begin{equation}\label{eq:contractive_loss}
\mathcal{L}\!\bigl(\Omega\bigr)=
-\sum_{(i,j)\in\mathcal{I}}
\left[y_{ij}\log P(Y_{ij}=1|\mathbf{s}^{(l)}_i, \mathbf{s}^{(r)}_j)+(1-y_{ij})\log\bigl(1-P(Y_{ij}=1|\mathbf{s}^{(l)}_i, \mathbf{s}^{(r)}_j)\bigr)\right],
\end{equation}
with success probability $P(Y_{ij}=1|\mathbf{s}_i, \mathbf{s}_j) =\exp\left(-\left\|\phi_{\Omega}(\mathbf{s}_i)
-\phi_{\Omega}(\mathbf{s}_j)\right\|^2\right),
$
where $\phi_{\Omega}$ denotes the embedding function defined as a simple two-layer network with sigmoid activation functions.

Since we suppose that our dataset contains many different genomes, negative pairs are most likely to originate from different genomes. Similarly, the positive pairs contain DNA fragments from the same genome due to the nature of the construction procedure. Therefore, the model learns to produce similar embeddings for $k$-mer profiles from the same genome and dissimilar embeddings for profiles from different genomes.

\textbf{Motivation:} While the above contrastive framework provides an efficient way to learn genome-aware embeddings, its reliance on squared Euclidean distances between deterministic points (non-uncertain representations) in latent space imposes a fundamental limitation. As discussed in the introduction, many DNA fragments do not belong exclusively to a single cluster: they may genuinely occur in multiple genomes (e.g., through horizontal gene transfer \citep{arnoldHorizontalGeneTransfer2022}) or, conversely, fragments from distinct genomes may yield indistinguishable $k$-mer feature vectors. In both cases, the fixed-point embeddings produced by the above model collapses such sequences to the same location in the latent space, preventing any clustering algorithm from assigning them consistently. Lemma \ref{lemma:bounds} and Corollary \ref{corollary:limitations} (below) formalize this limitation by showing that, under deterministic embeddings, it is not always possible to satisfy all pairwise constraints. In particular, when the set of DNA sequences is sufficiently large and originates from different clusters, they cannot simultaneously be mapped close to a cluster centroid to represent their membership while also being placed far apart from one another to reflect their pairwise dissimilarities.. This motivates our shift to probabilistic embeddings, where each fragment is represented by a distribution in the latent space, explicitly encoding ambiguity and thereby expanding the embedding space’s flexibility to better separate multi-cluster or otherwise indistinguishable sequences.

\textbf{Proposed model:} Our approach uses two encoder networks outputting a mean--covariance pair $(\boldsymbol{\mu},\mathbf{S})$ so that every fragment is represented as a Gaussian distribution. This provides a principled way of modelling sequence-level ambiguity rather than evaluating the success probability $p(Y_{ij}=1\mid \mathbf{s}_i, \mathbf{s}_j)$ at fixed points. We define a new conditional distribution that \emph{marginalizes over the uncertainty of both embeddings} $(\mathbf{z}_i, \mathbf{z}_j)$,
\begin{align}\label{eq:bernoulli_success_prob_general_form}
\mathbb{E}_{\substack{\mathbf{z}_{i} \sim \mathcal{N}(\mathbf{\mu}_{i}, \mathbf{S}_{i}) \\ \mathbf{z}_{j} \sim \mathcal{N}(\mathbf{\mu}_{j}, \mathbf{S}_{j})}}\left[ p(Y_{ij}=1 \mid \mathbf{z}_i, \mathbf{z}_j )\right] = \mathbb{E}_{\substack{\mathbf{z}_{i} \sim \mathcal{N}(\mathbf{\mu}_{i}, \mathbf{S}_{i}) \\ \mathbf{z}_{j} \sim \mathcal{N}(\mathbf{\mu}_{j}, \mathbf{S}_{j})}}\left[ \exp\left(-\frac{1}{2}(\mathbf{z}_i - \mathbf{z}_j)^\top \mathbf{K}_{ij}^{-1} (\mathbf{z}_i - \mathbf{z}_j)\right)\right],
\end{align}
where $\mathbf{K}_{ij} \succ 0$ is a positive definite matrix. The inner part of the expectation captures the (unnormalized) Gaussian likelihood of the embedding difference $\mathbf{z}_i-\mathbf{z}_j\sim \mathcal{N}(\mathbf{0}, \mathbf{K}_{ij})$, and $\bfK_{ij}$ can therefore be seen as representing the uncertainty in the distance between $\mathbf{z}_i$ and $\mathbf{z}_j$ providing different weights to the differences across each embedding dimension; $\bfK_{ij}$ can thus also be interpreted as a local metric tensor.  By Lemma \ref{lemma:closed_form_expectation}, we can find the closed-form solution of the expectation term over embeddings (proofs of all formal results are placed in Appendix \ref{app:proofs}):
\begin{lemma}(Closed-form expectation)\label{lemma:closed_form_expectation}
Let $\mathbf{z}_i \sim \mathcal{N}\left( \mathbf{\mu}_i, \mathbf{S}_i \right)$ and $\mathbf{z}_j \sim \mathcal{N}\left( \mathbf{\mu}_j, \mathbf{S}_j \right)$ be independent random variables. For a given positive definite matrix $\mathbf{K}_{ij} \succ 0$, Eq. \ref{eq:bernoulli_success_prob_general_form} can be computed as
\begin{align}
\frac{ 1 }{\sqrt{ |\mathbf{K}_{ij}^{-1}(\mathbf{S}_{i}+\mathbf{S}_{j}) + \mathbf{I}| }}  \exp\left( -\frac{1}{2}(\mathbf{\mu}_{i} - \mathbf{\mu}_j)^{\top}\left(  \mathbf{S}_{i}+\mathbf{S}_{j} + \mathbf{K}_{ij}\right)^{-1}(\mathbf{\mu}_{i} - \mathbf{\mu}_j) \right).
\end{align}
\end{lemma}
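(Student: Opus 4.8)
The plan is to collapse the two independent Gaussian expectations into a single one and then evaluate a standard Gaussian integral of an exponential–quadratic form. First I would introduce the difference variable $\mathbf{d} := \mathbf{z}_i - \mathbf{z}_j$. Because $\mathbf{z}_i \sim \mathcal{N}(\boldsymbol{\mu}_i, \mathbf{S}_i)$ and $\mathbf{z}_j \sim \mathcal{N}(\boldsymbol{\mu}_j, \mathbf{S}_j)$ are independent, $\mathbf{d}$ is itself Gaussian with mean $\boldsymbol{\mu} := \boldsymbol{\mu}_i - \boldsymbol{\mu}_j$ and covariance $\boldsymbol{\Sigma} := \mathbf{S}_i + \mathbf{S}_j$. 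Since the integrand of Eq.~\ref{eq:bernoulli_success_prob_general_form} depends on $(\mathbf{z}_i,\mathbf{z}_j)$ only through $\mathbf{d}$, the double expectation reduces to the single integral $\mathbb{E}_{\mathbf{d}\sim\mathcal{N}(\boldsymbol{\mu},\boldsymbol{\Sigma})}\left[\exp\left(-\frac{1}{2}\mathbf{d}^\top \mathbf{K}_{ij}^{-1}\mathbf{d}\right)\right]$ over $\mathbb{R}^n$.

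Next I would write this expectation explicitly against the Gaussian density and merge the two quadratic forms $(\mathbf{d}-\boldsymbol{\mu})^\top \boldsymbol{\Sigma}^{-1}(\mathbf{d}-\boldsymbol{\mu})$ and $\mathbf{d}^\top \mathbf{K}_{ij}^{-1}\mathbf{d}$ into a single quadratic in $\mathbf{d}$ with curvature matrix $\mathbf{M} := \boldsymbol{\Sigma}^{-1} + \mathbf{K}_{ij}^{-1}$; positive definiteness of $\mathbf{S}_i,\mathbf{S}_j,\mathbf{K}_{ij}$ ensures $\mathbf{M}\succ 0$, so the integral converges. Completing the square in $\mathbf{d}$ and integrating out the resulting shifted Gaussian leaves two factors: a normalizing determinant $|\boldsymbol{\Sigma}|^{-1/2}|\mathbf{M}|^{-1/2}$, and an exponential in $\boldsymbol{\mu}$ whose argument is $-\frac{1}{2}\left(\boldsymbol{\mu}^\top\boldsymbol{\Sigma}^{-1}\boldsymbol{\mu} - \boldsymbol{\mu}^\top\boldsymbol{\Sigma}^{-1}\mathbf{M}^{-1}\boldsymbol{\Sigma}^{-1}\boldsymbol{\mu}\right)$.

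The final step is to rewrite both factors in the stated form. For the determinant I would use $|\boldsymbol{\Sigma}|\,|\mathbf{M}| = |\boldsymbol{\Sigma}\mathbf{M}| = |\mathbf{I} + \boldsymbol{\Sigma}\mathbf{K}_{ij}^{-1}|$ and then apply Sylvester's determinant identity $|\mathbf{I} + \mathbf{B}\mathbf{C}| = |\mathbf{I} + \mathbf{C}\mathbf{B}|$ to obtain $|\mathbf{K}_{ij}^{-1}(\mathbf{S}_i + \mathbf{S}_j) + \mathbf{I}|$, which is exactly the claimed prefactor. For the exponent I would invoke the Woodbury matrix identity in the form $\boldsymbol{\Sigma}^{-1} - \boldsymbol{\Sigma}^{-1}(\boldsymbol{\Sigma}^{-1} + \mathbf{K}_{ij}^{-1})^{-1}\boldsymbol{\Sigma}^{-1} = (\boldsymbol{\Sigma} + \mathbf{K}_{ij})^{-1}$, which collapses the bracketed quadratic into $\boldsymbol{\mu}^\top(\mathbf{S}_i + \mathbf{S}_j + \mathbf{K}_{ij})^{-1}\boldsymbol{\mu}$, matching the exponent in the statement.

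The routine work—completing the square and tracking the $(2\pi)^{n/2}$ normalizers—is mechanical. I expect the main obstacle to be the bookkeeping that turns the raw output into the symmetric closed form: recognizing that the determinant prefactor simplifies only via Sylvester's identity, and that the exponent simplification is precisely an instance of Woodbury. I would also record that $\mathbf{K}_{ij}\succ 0$ together with $\mathbf{S}_i,\mathbf{S}_j\succeq 0$ guarantees all indicated inverses exist, so no degenerate cases require separate treatment.
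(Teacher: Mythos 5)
Your proposal is correct and follows essentially the same route as the paper's proof: reduce to the difference variable $\mathbf{z}_i-\mathbf{z}_j\sim\mathcal{N}(\boldsymbol{\mu}_i-\boldsymbol{\mu}_j,\mathbf{S}_i+\mathbf{S}_j)$, complete the square with curvature matrix $\mathbf{K}_{ij}^{-1}+(\mathbf{S}_i+\mathbf{S}_j)^{-1}$, evaluate the shifted Gaussian integral, and simplify the exponent via the Woodbury identity and the prefactor via the determinant product (the paper obtains $|\mathbf{K}_{ij}^{-1}\mathbf{S}_{ij}+\mathbf{I}|$ directly from $|\mathbf{A}_{ij}\mathbf{S}_{ij}|$ without needing Sylvester's identity, but that is a cosmetic difference).
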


A natural choice for $\mathbf{K}_{ij}$ is $\mathbf{K}_{ij} := \alpha(\mathbf{S}_i + \mathbf{S}_j)$, so that (the local metric tensor) $\bfK_{ij}$ reflects the point-wise uncertainty in the embeddings with more "uncertain" points contributing less to the similarity measure. The parameter $\alpha$ is learnable, but in the remainder of the paper we set $\alpha=1$; Appendix~\ref{sec:alpha} includes further insight into the role of $\alpha \in \mathbb{R}^+$.  With this choice, the expectation in Eq. \ref{eq:bernoulli_success_prob_general_form} simplifies algebraically to
\begin{align}
\label{equ:algebraic_simplification}
&\mathbb{E}_{\substack{\mathbf{z}_{i} \sim \mathcal{N}(\mathbf{\mu}_{i}, \mathbf{S}_{i}) \\ \mathbf{z}_{j} \sim \mathcal{N}(\mathbf{\mu}_{j}, \mathbf{S}_{j})}}\left[ p(Y_{ij}=1 \mid \cdots) \right] = \frac{1}{\sqrt{2^D}}  \exp\left( -\frac{1}{4} (\mu_{i} - \mu_j)^{\top} \left( \mathbf{S}_{i} + \mathbf{S}_{j} \right)^{-1}  (\mu_{i} - \mu_j) \right),
\end{align}
where $D$ is the latent dimension size. Here, it is worth emphasizing that both the mean vectors $(\mathbf{\mu}_i, \mathbf{\mu}_j)$ and covariance matrices $(\mathbf{S}_i, \mathbf{S}_j)$ are parameterized by two simple neural networks denoted by $\phi_\mu$ and $\phi_\sigma$. In our experimental setup, they consist of a single hidden layer including $512$ units with sigmoid activation functions, and the output dimension (i.e., $D$) is set to $256$. 

It is important to note that the expectation in Eq.~\ref{equ:algebraic_simplification} does not yield a properly normalized Bernoulli success probability, because its value ranges only from \(0\) up to \(1/\sqrt{2^{D}}\) rather than the full \([0,1]\) interval. To obtain a valid probability measure, we therefore renormalize this quantity by multiplying it with \(\sqrt{2^{D}}\). This rescaling defines our final success probability, denoted by \(q\bigl(Y_{ij}=1 \mid s_{i},s_{j}\bigr)\), which is guaranteed to lie between \(0\) and \(1\).

We optimize the parameters of these neural networks by maximizing the same loss given in Eq.~\ref{eq:contractive_loss}, but using $q\bigl(Y_{ij}=1 \mid s_{i},s_{j}\bigr)$ as a sucess probability for positive pairs. 


\begin{definition}\label{def:distinguishable}
For a given $\epsilon \in (0, 1/2 )$, a mapping function $\phi: \mathcal{S} \to \mathbb{R}^D\times \mathbb{R}^D_{+}$, where $\phi:= (\phi_{\mu}, \phi_{\sigma})$ with $\phi_{\mu}: \mathcal{S} \to \mathbb{R}^D$ and $ \phi_{\sigma}: \mathcal{S} \to \mathbb{R}^D_{+}$, satisfying
\[q\bigl(Y_{ij} = y_{ij} \mid s_{i},s_{j}\bigr)\geq (1-\epsilon)\]
for all $((\mathbf{s}_i, \mathbf{s}_j), y_{ij}) \in \mathcal{S} \times \mathcal{Y}$ is called an \textbf{$\epsilon$-distinguishable embedding} function for $\mathcal{S} \times \mathcal{Y}$, where  $\mathcal{S} \subset \Sigma^L \times \Sigma^L$ denote the set of sequence pairs with associated labels $\mathcal{Y}$. 
\end{definition}
For notational convenience, we omit $\mathcal{Y}$ whenever it is clear from the context. Intuitively, an $\epsilon$-distinguishable embedding function guarantees that positive pairs remain close in the latent space, while negative pairs are sufficiently separated. Lemma~\ref{lemma:bounds} formalizes this relationship by providing explicit bounds on the Euclidean distance between the embedding means as a function of the corresponding variances, thus offering theoretical guarantees for pairwise distinguishability.

\begin{lemma}\label{lemma:bounds}
Let $\epsilon \in (0, 1/2 )$, and let $\phi: \mathcal{S} \to  \mathbb{R}^D \times \mathbb{R}^D_{+}$ be an $\epsilon$-distinguishable embedding function for a pair $(\mathbf{s}_i,\mathbf{s}_j)\in\mathcal{S}^2$ and label $y_{ij} \in \{0,1\}$ where $\phi:= (\phi_{\mu}, \phi_{\sigma})$ with $\phi_{\mu}: \mathcal{S} \to \mathbb{R}^D$ and $ \phi_{\sigma}: \mathcal{S} \to \mathbb{R}^D_{+}$. Then the following bounds hold:
\begin{align}
\min_{d}\left\{ (\phi_\sigma(\mathbf{s}_i) + \phi_\sigma(\mathbf{s}_j))_d \right\}
 \log\left( \frac{1}{ \epsilon^4 }  \right)
&\leq \|\phi_\mu(\mathbf{s}_i) - \phi_\mu(\mathbf{s}_j)\|^2_2 
&&\text{if $y_{ij}=0$,}
\\
\max_{d}\left\{ (\phi_\sigma(\mathbf{s}_i) + \phi_\sigma(\mathbf{s}_j))_d \right\}
 \log\left( \frac{1}{(1-\epsilon)^{4}} \right)
&\geq \|\phi_\mu(\mathbf{s}_i) - \phi_\mu(\mathbf{s}_j)\|^2_2 
&&\text{if $y_{ij}=1$.}
\end{align}
\end{lemma}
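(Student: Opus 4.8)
The plan is to start from the explicit renormalized success probability and reduce both inequalities to an elementary sandwich bound on a diagonally-weighted quadratic form. Recall from Eq.~\ref{equ:algebraic_simplification} that, after multiplying by $\sqrt{2^D}$ to renormalize, the match probability is
\[
q(Y_{ij}=1\mid \mathbf{s}_i,\mathbf{s}_j)=\exp\!\left(-\tfrac14(\phi_\mu(\mathbf{s}_i)-\phi_\mu(\mathbf{s}_j))^\top(\mathbf{S}_i+\mathbf{S}_j)^{-1}(\phi_\mu(\mathbf{s}_i)-\phi_\mu(\mathbf{s}_j))\right),
\]
where $\mathbf{S}_i=\mathrm{diag}(\phi_\sigma(\mathbf{s}_i))$ is diagonal because $\phi_\sigma$ outputs a vector in $\mathbb{R}^D_{+}$. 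Writing $\mathbf{m}:=\phi_\mu(\mathbf{s}_i)-\phi_\mu(\mathbf{s}_j)$ and $\sigma_d:=(\phi_\sigma(\mathbf{s}_i)+\phi_\sigma(\mathbf{s}_j))_d>0$, the quadratic form expands componentwise to $\sum_{d=1}^D m_d^2/\sigma_d$, so $q(Y_{ij}=1\mid\mathbf{s}_i,\mathbf{s}_j)=\exp(-\tfrac14\sum_d m_d^2/\sigma_d)\in(0,1]$. The whole proof then rests on the elementary fact that, for a diagonal positive-definite matrix, this weighted sum is sandwiched as $\|\mathbf{m}\|_2^2/\max_d\sigma_d \le \sum_d m_d^2/\sigma_d \le \|\mathbf{m}\|_2^2/\min_d\sigma_d$.

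For the positive case $y_{ij}=1$, I would insert the $\epsilon$-distinguishability hypothesis $q(Y_{ij}=1\mid\mathbf{s}_i,\mathbf{s}_j)\ge 1-\epsilon$ directly, take logarithms (monotone), and rearrange to $\sum_d m_d^2/\sigma_d \le -4\log(1-\epsilon)=\log(1/(1-\epsilon)^4)$. Lower-bounding each weight $1/\sigma_d$ by $1/\max_{d'}\sigma_{d'}$ gives $\|\mathbf{m}\|_2^2/\max_d\sigma_d \le \sum_d m_d^2/\sigma_d$; chaining the two inequalities produces the stated upper bound $\|\mathbf{m}\|_2^2 \le (\max_d\sigma_d)\log(1/(1-\epsilon)^4)$.

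For the negative case $y_{ij}=0$, the only additional step is to translate the hypothesis about the negative label into one about the match probability: since $q(Y_{ij}=0\mid\mathbf{s}_i,\mathbf{s}_j)=1-q(Y_{ij}=1\mid\mathbf{s}_i,\mathbf{s}_j)\ge 1-\epsilon$, we get $q(Y_{ij}=1\mid\mathbf{s}_i,\mathbf{s}_j)\le\epsilon$. Taking logarithms and rearranging now yields $\sum_d m_d^2/\sigma_d \ge -4\log\epsilon=\log(1/\epsilon^4)$. Upper-bounding each weight $1/\sigma_d$ by $1/\min_{d'}\sigma_{d'}$ gives $\sum_d m_d^2/\sigma_d \le \|\mathbf{m}\|_2^2/\min_d\sigma_d$, and chaining with the previous inequality gives the lower bound $\|\mathbf{m}\|_2^2 \ge (\min_d\sigma_d)\log(1/\epsilon^4)$.

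There is no genuine mathematical obstacle here; the lemma is a direct consequence of the Rayleigh-quotient bounds applied to a diagonal covariance. The points that require care — and the likely source of any slip — are three bookkeeping steps: (i) the renormalization by $\sqrt{2^D}$, which is precisely what removes the $1/\sqrt{2^D}$ prefactor so that $q$ is a clean exponential; (ii) the direction-reversing identity $q(Y=0)=1-q(Y=1)$ in the negative case; and (iii) matching the $\min$/$\max$ choice to the sign of the target bound, so that the monotonicity of $\log$ and the direction of each inequality stay consistent. Once these are fixed, the two displayed bounds follow immediately.
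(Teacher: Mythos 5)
Your proof is correct and follows essentially the same route as the paper's: both start from the renormalized closed-form $q(Y_{ij}=1\mid\cdot)=\exp\bigl(-\tfrac14\,\mathbf{m}^\top(\mathbf{S}_i+\mathbf{S}_j)^{-1}\mathbf{m}\bigr)$, use $q(Y_{ij}=0\mid\cdot)=1-q(Y_{ij}=1\mid\cdot)$ to translate the negative-label hypothesis, take logarithms, and sandwich the diagonal quadratic form via $\|\mathbf{m}\|_2^2/\max_d\sigma_d\le\sum_d m_d^2/\sigma_d\le\|\mathbf{m}\|_2^2/\min_d\sigma_d$. The only cosmetic difference is that the paper multiplies through by $\min_d\{\sigma_d^2\}$ or $\max_d\{\sigma_d^2\}$ rather than phrasing the step as a Rayleigh-quotient bound.
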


\begin{wrapfigure}{r}{0.32\textwidth}
\hspace*{-0.5cm}\includegraphics[width=0.40\textwidth]{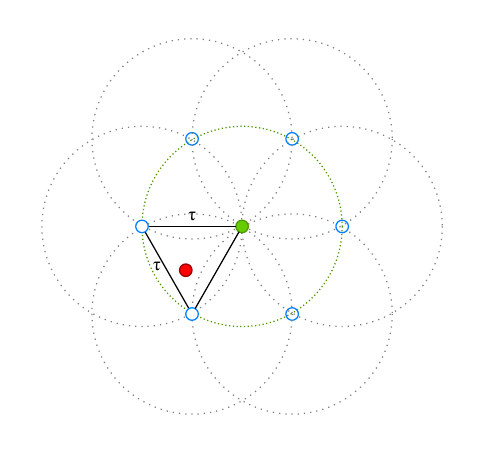}
\caption{Packing number ($\mathcal{P}_\tau^D$)}
\label{fig:packing_number}
\end{wrapfigure}

These bounds highlight the critical role of variances in our setting. Intuitively, if a sequence is associated with multiple clusters, its mean representation cannot be simultaneously close to all corresponding centroids, since these centroids must remain sufficiently separated. For instance, Figure \ref{fig:packing_number} depicts six blue points placed at distance $\tau$ from a central green point, while also being pairwise $\tau$-separated. In this two-dimensional configuration, it is not possible to place an additional point (such as the red point) that lies within distance $\tau$ of the green point while remaining more than $\tau$ away from all the blue points. Hence, the covariance terms $\phi_\sigma(\mathbf{s})$ must adaptively increase to satisfy the distinguishability condition in Eq.~\ref{def:distinguishable}. 

Before stating the following corollary, we need to first introduce the \emph{packing number} $\mathcal{P}^D_\tau$ which is the maximum number of $\tau$-separated distinct points in a ball of radius $\tau$ in $D$-dimensional space \citep{vershynin2018high}. It will help us to formalize the fundamental limitations of the embedding function.

\begin{corollary}\label{corollary:limitations}
Let $\phi: \mathcal{S} \to  \mathbb{R}^D \times \mathbb{R}^D_{+}$ be an embedding function for the set $\mathcal{S}$ where $\phi:= (\phi_{\mu}, \phi_{\sigma})$ with $\phi_{\mu}: \mathcal{S} \to \mathbb{R}^D$ and $\phi_{\sigma}: \mathcal{S} \to \mathbb{R}^D_{+}$. If $\phi_{\sigma}(\mathbf{s}_i)_d=\phi_{\sigma}(\mathbf{s}_j)_d$ for all $\mathbf{s}_i,\mathbf{s}_j\in\mathcal{S}$, and $\forall d\in[D]$, and if there exists $\mathcal{P}^D_\tau + 2$ sequences, $\mathbf{s}_{0},\mathbf{s}_{1},\ldots,\mathbf{s}_{\mathcal{P}^D_\tau+1} \in \mathcal{S}$ such that each $(\mathbf{s}_{0},\mathbf{s}_i)$ is a positive pair (i.e. $y_{(0,i)} = 1)$ for all $i\in \{1,\ldots \mathcal{P}^{D}_\tau+1\}$ and $(\mathbf{s}_{i}, \mathbf{s}_{j})$ is a negative pair (i.e. $y_{(i,j)}=0$) for $1 \leq i < j \leq \mathcal{P}_{D}+1$, then it cannot be $\epsilon$-distinguishable function for $\epsilon \in (0, 1/2)$.
\end{corollary}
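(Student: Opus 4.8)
The plan is to derive a contradiction from the assumed existence of an $\epsilon$-distinguishable embedding function $\phi$ under the stated configuration. The key observation is that the hypothesis fixes a \emph{common} covariance value: since $\phi_\sigma(\mathbf{s}_i)_d = \phi_\sigma(\mathbf{s}_j)_d$ for all sequences and all coordinates $d$, the quantities $\min_d\{(\phi_\sigma(\mathbf{s}_i)+\phi_\sigma(\mathbf{s}_j))_d\}$ and $\max_d\{(\phi_\sigma(\mathbf{s}_i)+\phi_\sigma(\mathbf{s}_j))_d\}$ appearing in Lemma~\ref{lemma:bounds} coincide and equal the same constant, say $2\sigma^2$, independent of the pair. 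This collapses the two-sided bounds of Lemma~\ref{lemma:bounds} into comparable form, so that positive-pair and negative-pair distances are controlled by the \emph{same} scale factor.

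First I would apply the positive-pair bound of Lemma~\ref{lemma:bounds} to each pair $(\mathbf{s}_0, \mathbf{s}_i)$, obtaining $\|\phi_\mu(\mathbf{s}_0) - \phi_\mu(\mathbf{s}_i)\|_2^2 \le 2\sigma^2 \log(1/(1-\epsilon)^4)$ for every $i \in \{1,\ldots,\mathcal{P}^D_\tau+1\}$. Then I would apply the negative-pair bound to each pair $(\mathbf{s}_i,\mathbf{s}_j)$ with $1 \le i < j$, obtaining $\|\phi_\mu(\mathbf{s}_i)-\phi_\mu(\mathbf{s}_j)\|_2^2 \ge 2\sigma^2\log(1/\epsilon^4)$. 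Setting $\tau^2 := 2\sigma^2\log(1/\epsilon^4)$ as the separation radius, the goal is to show that the $\mathcal{P}^D_\tau+1$ points $\phi_\mu(\mathbf{s}_1),\ldots,\phi_\mu(\mathbf{s}_{\mathcal{P}^D_\tau+1})$ are all $\tau$-separated and all lie within distance $\tau$ of the center $\phi_\mu(\mathbf{s}_0)$, which would place $\mathcal{P}^D_\tau+1$ mutually $\tau$-separated points inside a ball of radius $\tau$, exceeding the packing number $\mathcal{P}^D_\tau$ by one — the desired contradiction.

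The main obstacle is the \emph{scale-matching} step: for the packing argument to apply cleanly, I need the positive-pair radius bound and the negative-pair separation bound to share a single threshold $\tau$, yet the logarithmic factors $\log(1/(1-\epsilon)^4)$ and $\log(1/\epsilon^4)$ are genuinely different. The crucial inequality to verify is that for $\epsilon \in (0,1/2)$ one has $\log(1/(1-\epsilon)^4) \le \log(1/\epsilon^4)$, equivalently $1-\epsilon \ge \epsilon$, which holds precisely because $\epsilon < 1/2$. This guarantees the positive-pair distances are bounded above by exactly the same $\tau$ that lower-bounds the negative-pair distances, so each $\phi_\mu(\mathbf{s}_i)$ lies in the closed ball $B(\phi_\mu(\mathbf{s}_0),\tau)$ while the $\{\phi_\mu(\mathbf{s}_i)\}_{i\ge 1}$ remain pairwise $\tau$-separated.

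Finally I would invoke the definition of the packing number directly: a ball of radius $\tau$ in $\mathbb{R}^D$ admits at most $\mathcal{P}^D_\tau$ pairwise $\tau$-separated points, yet the construction exhibits $\mathcal{P}^D_\tau + 1$ such points, which is impossible. Hence no $\epsilon$-distinguishable $\phi$ with constant coordinate-wise covariance can exist for this configuration, establishing the corollary. I would take care to state whether the packing inequalities are strict or non-strict (open versus closed ball, and $>\tau$ versus $\ge\tau$ separation), since the definition quoted in the text uses $\tau$-separation within a radius-$\tau$ ball; any boundary cases would be handled by an arbitrarily small perturbation of $\epsilon$ or by noting that the strict inequality $\epsilon < 1/2$ yields strict separation in the relevant place.
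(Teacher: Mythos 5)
Your proposal is correct and follows essentially the same route as the paper's proof: apply Lemma~\ref{lemma:bounds} with the common variance to collapse the positive- and negative-pair bounds onto a single scale, check that $\epsilon<1/2$ orders the two logarithmic factors correctly, and then invoke the packing number to show that $\mathcal{P}^D_\tau+1$ pairwise $\tau$-separated points cannot fit in a $\tau$-ball around $\phi_\mu(\mathbf{s}_0)$. The only cosmetic difference is the choice of threshold --- you take $\tau^2 = 2\sigma^2\log(\epsilon^{-4})$ while the paper uses the midpoint of the two log terms to make both inequalities strict --- and your closing remark about strict versus non-strict separation addresses exactly that point.
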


From Corollary~\ref{corollary:limitations}, we see that fixed-variance embeddings have intrinsic limitations in expressiveness: they cannot simultaneously satisfy the pairwise constraints of a sufficiently large sequence set. Allowing covariance terms to vary introduces additional degrees of freedom, enhancing the modeling capacity of the embedding function. Theorem \ref{thm:theorem} relies on this insight, showing that sequences belonging to multiple clusters tend to have larger covariance terms in order to handle the desired complex proximity relationship among sequences, building on distances in a latent space.

\begin{theorem}\label{thm:theorem}
An embedding function $\phi:\mathcal{S} \to \mathbb{R}^D \times \mathbb{R}^D_+$ with bounded means (i.e. $\| \phi_\mu(\mathbf{s})\| < \infty$) is $\epsilon$-distinguishable for some $\epsilon \in (0, 1/2)$ if and only if there exists a set of sequences $\{\mathbf{s}_{0},\mathbf{s}_{1},\ldots, \mathbf{s}_{N}\} \subseteq \mathcal{S}$ where each $(\mathbf{s}_0, \mathbf{s}_{i})$ is a positive pair and $(\mathbf{s}_{i}, \mathbf{s}_{j})$ is negative pair satisfying $\phi_\sigma(\mathbf{s}_i)_d < \infty$ for $1 \leq i \leq N$ and $\phi_\sigma(\mathbf{s}_0)_d \to \infty$ for all $d\in [D]$ with $N > P^D_\tau$. 
\end{theorem}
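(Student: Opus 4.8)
The plan is to translate $\epsilon$-distinguishability into purely geometric constraints on the means and variances, and then to reduce the simultaneous satisfiability of these constraints to a packing problem governed by $\mathcal{P}^D_\tau$. Writing $C_1 := \log(1/(1-\epsilon)^4)$ and $C_2 := \log(1/\epsilon^4)$, and noting $C_1 < C_2$ for $\epsilon \in (0,1/2)$, Lemma~\ref{lemma:bounds} gives, for the configuration in the statement, that every positive pair $(\mathbf{s}_0,\mathbf{s}_i)$ confines the leaf mean $\phi_\mu(\mathbf{s}_i)$ to a ball of radius $R_i^2 = \max_d(\phi_\sigma(\mathbf{s}_0)+\phi_\sigma(\mathbf{s}_i))_d\cdot C_1$ around $\phi_\mu(\mathbf{s}_0)$, while every negative pair $(\mathbf{s}_i,\mathbf{s}_j)$ forces $\|\phi_\mu(\mathbf{s}_i)-\phi_\mu(\mathbf{s}_j)\|_2^2 \ge r^2 := \min_d(\phi_\sigma(\mathbf{s}_i)+\phi_\sigma(\mathbf{s}_j))_d\cdot C_2$. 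The crucial asymmetry I would exploit is that $R_i$ depends on the central variance $\phi_\sigma(\mathbf{s}_0)$, whereas $r$ depends only on the (finite) leaf variances.

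For the necessity direction ($\Rightarrow$), I would assume $\phi$ is $\epsilon$-distinguishable on this configuration and show the central variance must diverge. The $N$ leaf means lie in a ball of radius $R := \max_i R_i$ and are pairwise $\ge r$-separated, so $N$ is at most the packing number $M(R,r,D)$ of $r$-separated points in a radius-$R$ ball. By scale invariance $M(r,r,D) = \mathcal{P}^D_\tau$, so if $R \le r$ we would obtain $N \le \mathcal{P}^D_\tau$, contradicting $N > \mathcal{P}^D_\tau$; hence $R > r$. Quantitatively, the standard estimate $M(R,r,D) \le (1+2R/r)^D$ forces $R/r \ge \tfrac12(N^{1/D}-1)$, and since $r$ and the leaf variances $\phi_\sigma(\mathbf{s}_i)$ are finite while $R^2 \asymp \max_d\phi_\sigma(\mathbf{s}_0)_d\cdot C_1$, this yields $\max_d\phi_\sigma(\mathbf{s}_0)_d \gtrsim r^2(N^{1/D}-1)^2/C_1 \to \infty$ as the number of mutually exclusive clusters $N$ grows beyond $\mathcal{P}^D_\tau$; the central variance cannot stay bounded.

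For the sufficiency direction ($\Leftarrow$), I would argue that once the central variance is allowed to grow, one can realize an $\epsilon$-distinguishable embedding with bounded means. Fixing finite leaf variances, I place the $N$ leaf means as an $r$-separated packing inside a ball of radius $R$ centered at $\phi_\mu(\mathbf{s}_0)$; since $M(R,r,D)$ also grows like $(R/r)^D$, such a packing with $N > \mathcal{P}^D_\tau$ points exists once $R/r$ is large enough, which by $R^2 \asymp \max_d\phi_\sigma(\mathbf{s}_0)_d\cdot C_1$ holds as soon as $\phi_\sigma(\mathbf{s}_0)_d$ is sufficiently large. I would then verify the exact conditions on the renormalized $q$ (not merely the relaxed bounds of Lemma~\ref{lemma:bounds}): the negative constraints $\sum_d(\phi_\mu(\mathbf{s}_i)-\phi_\mu(\mathbf{s}_j))_d^2/(\phi_\sigma(\mathbf{s}_i)+\phi_\sigma(\mathbf{s}_j))_d \ge C_2$ hold by the $r$-separation, while the positive constraints $\sum_d(\phi_\mu(\mathbf{s}_0)-\phi_\mu(\mathbf{s}_i))_d^2/(\phi_\sigma(\mathbf{s}_0)+\phi_\sigma(\mathbf{s}_i))_d \le C_1$ hold because the large central variance shrinks every summand; all means remain finite, so the bounded-mean hypothesis is met.

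The step I expect to be the main obstacle is making the packing count tight enough to pin the transition exactly at $\mathcal{P}^D_\tau$: the clean threshold arises from the critical ratio $R = r$, where $M(R,r,D) = \mathcal{P}^D_\tau$, but upgrading ``$R>r$'' to the quantitative divergence $\phi_\sigma(\mathbf{s}_0)_d \to \infty$ requires two-sided packing estimates $c\,(R/r)^D \le M(R,r,D) \le (1+2R/r)^D$ together with care about the anisotropy of $\phi_\sigma(\mathbf{s}_0)$ (a single large coordinate only elongates the admissible region along one axis, so forcing divergence in every $d$ uses the bounded-mean constraint to rule out packing the leaves into a lower-dimensional slab). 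I would also treat the ``$\to\infty$'' in the statement as an asymptotic claim over the family indexed by the number of clusters $N$, since for a single fixed $\phi$ the variance is a finite quantity.
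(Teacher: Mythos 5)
Your overall strategy is the same as the paper's: both directions go through Lemma~\ref{lemma:bounds}, which converts $\epsilon$-distinguishability into a ball-confinement radius $R$ (controlled by the central variance) and a pairwise separation $r$ (controlled by the leaf variances), and then through the packing number $\mathcal{P}^D_\tau$. The differences are in execution. For necessity, the paper forms the ratio of the positive- and negative-pair Mahalanobis terms, bounds it by $\log((1-\epsilon)^{-4})/\log(\epsilon^{-4})$, sends $\epsilon\to 0$, and concludes via a dichotomy that either the leaf variances collapse to $0^{+}$ (a branch it dismisses by citing Lemma~\ref{lemma:packing_number} and Corollary~\ref{corollary:limitations}) or the central variance diverges; you instead keep $\epsilon$ fixed and drive $N\to\infty$ through the quantitative packing bound $M(R,r,D)\le(1+2R/r)^D$. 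Your reading of the divergence as asymptotic in $N$ is arguably more faithful to the statement than the paper's $\epsilon\to 0$ limit (the theorem says ``for some $\epsilon$''), and you are more candid than the paper about the fact that the argument only directly forces $\max_d\phi_\sigma(\mathbf{s}_0)_d$ to diverge rather than every coordinate. The sufficiency directions are essentially identical: verify the exact $q$-inequalities using $1-x\le e^{-x}$, with the large central variance absorbing the positive constraints and the $r$-separation handling the negatives.

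The genuine gap is in your necessity step ``$R/r\ge\tfrac12(N^{1/D}-1)$ and $r$ and the leaf variances are finite, hence $R\to\infty$.'' Finiteness of the leaf variances does not bound $r$ away from zero, and without such a bound the conclusion fails: with a bounded central variance one can place $N$ leaf means at mutual distances $\delta$ inside a small ball around $\phi_\mu(\mathbf{s}_0)$ and assign each leaf a variance of order $\delta^2/\log(1/\epsilon)$, so that every negative-pair Mahalanobis term exceeds $4\log(1/\epsilon)$ while every positive-pair term stays below $4\log(1/(1-\epsilon))$; this is $\epsilon$-distinguishable for arbitrarily large $N$ with no divergence of $\phi_\sigma(\mathbf{s}_0)$. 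In other words, the ratio $R/r$ can blow up through the denominator rather than the numerator, which is exactly the branch the paper's proof explicitly names ($(\phi_\sigma(\mathbf{s}_i)_d+\phi_\sigma(\mathbf{s}_j)_d)\to 0^{+}$) before dismissing it. Whether the paper's citation actually closes that branch is itself debatable --- the cited corollary assumes constant variances --- but your plan does not engage with it at all, and a complete proof must either rule out vanishing leaf variances or add a lower bound on them as a hypothesis. The same issue resurfaces in your closing remark on anisotropy: confining the leaves to a lower-dimensional slab only contradicts the packing count if $r$ is bounded below.
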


This result underscores the importance of probabilistic embeddings: by having covariance terms, the model can represent complex relationships among sequences that deterministic embeddings cannot capture. In the following section, we will demonstrate the effectiveness of this approach on artificial and real genomic datasets.


\section{Experiments\label{sec:experiments}}

We evaluated \textsc{\modelname} under the same experimental setup as \citep{zhou2024dnabert,ccelikkanat2024revisiting} to ensure a fair comparison with previous deterministic and large genome foundation models while highlighting the benefits of probabilistic embeddings for metagenomic binning. Due to page limitations, we provide the detailed information about the baseline models in Appendix~\ref{appendix:experiments}.

\textbf{Datasets.} For our experiments, we adopt the benchmark datasets introduced in prior work on the metagenomic binning task \citep{zhou2024dnabert}. The datasets are constructed from reference genomes in GenBank and consist of viral, fungal, and bacterial sequences. The training data contains more than $2$ million sequence pairs of length $1000$bp. For testing, we have six datasets (\textsl{Reference 5/6}, \textsl{Plant 5/6}, and \textsl{Marine 5/6}) with species represented by highly variable numbers of sequences ($10$–$4,599$), ranging from $2$-$20$ kbp in length. While \textsl{Reference} datasets consist of DNA fragments from $250$-$330$ fungal and viral genomes, and \textsl{Marine} and \textsl{Plant}-associated environments contain $70$k-$125$k sequences from roughly $180$-$520$ species.


\textbf{Training procedure.} For our method, training was performed within a contrastive learning framework by optimizing the objective function given in Eq. \ref{eq:contractive_loss} with our new success probability $q(y_{ij}\mid\cdots)$. Positive pairs were obtained by splitting each fragment into two halves, guaranteeing that both subsequences come from the same genome. Negative pairs were formed by randomly pairing fragments from the dataset, and this procedure ensures, with high probability, that the paired sequences come from different genomes.

We trained our model using the Adam optimizer with a learning rate of $10^{-2}$. The model consists of $2$ two-layer neural networks, as described in Section \ref{sec:model}, that output the mean and variance terms of the multivariate normal distribution for a given input sequence. To improve stability, we first train the mean network alone for $50$ epochs, and then train the variance network only for an additional $20$ epochs. From the original dataset consisting of $2\times 10^6$ pairs \citep{zhou2024dnabert}, we randomly subsample $10^5$ pairs to demonstrate that our method is effective even with smaller training sets compared to large genome foundation models. For each positive pair, we generated $200$ negative pairs, resulting in a total of $2,01 \times 10^7$ pairs. Training was also performed with a batch size of $10^5$. 

\begin{figure}
     \centering
     \begin{subfigure}[b]{0.49\textwidth}
         \centering
         \includegraphics[width=\textwidth]{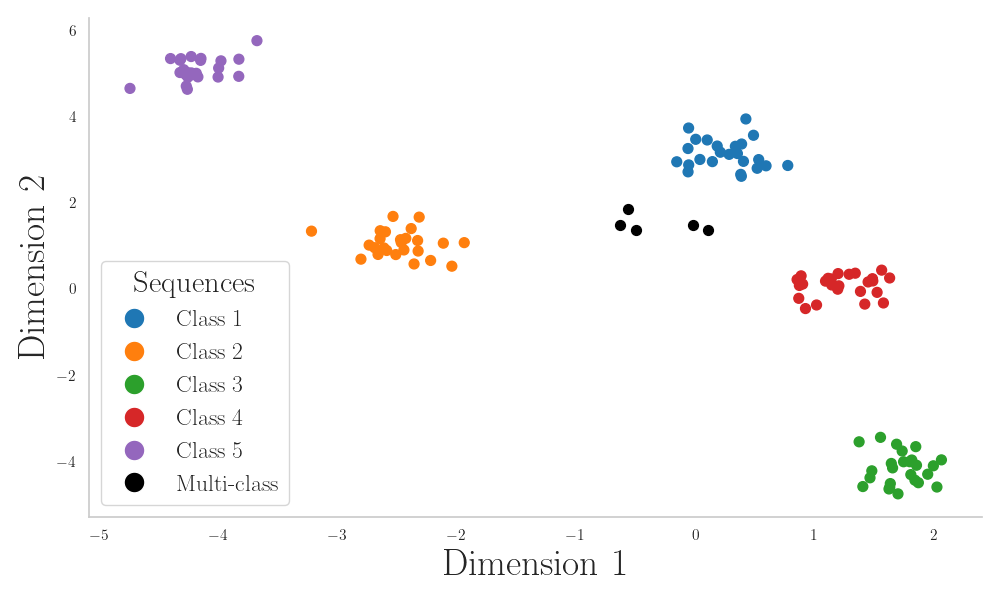}
         \caption{Learned Embeddings}
         \label{fig:toy_example_embs}
     \end{subfigure}
     \hfill
     \begin{subfigure}[b]{0.49\textwidth}
         \centering
         \includegraphics[width=\textwidth]{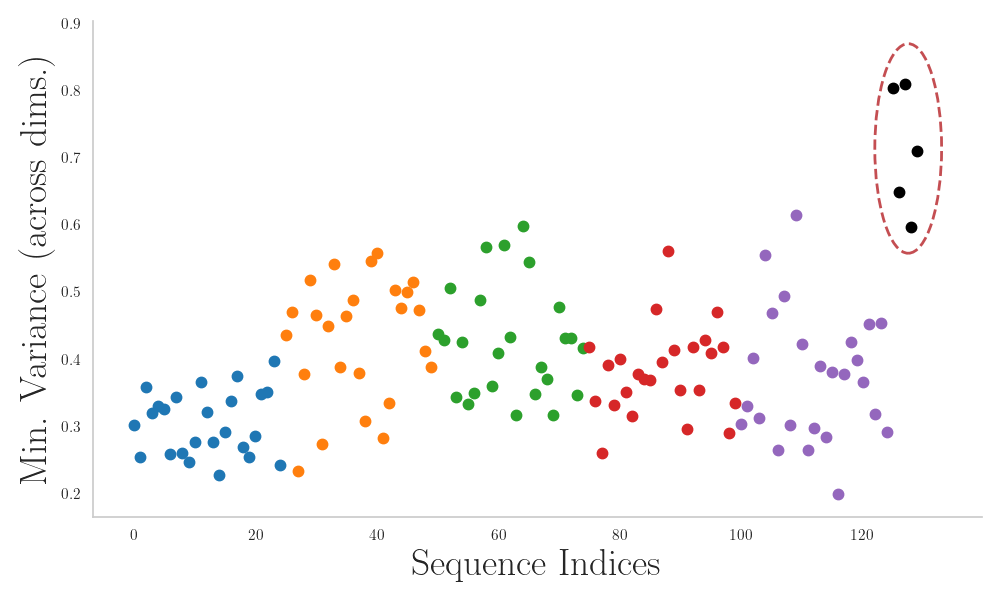}
         \caption{Variance Distribution}
         \label{fig:toy_example_var_distr}
     \end{subfigure}
     \caption{Visualization of the learned embeddings and the variance distribution of the sequences.}
     \label{fig:toy_example}
\vspace{-0.5cm}\end{figure}

\subsection{Toy Example with $k$-mer Dataset}\label{sec:toy}
To investigate the behavior of our model in a controlled setting, we designed a synthetic dataset of $4$-mer sequences. This setup allows us to assess whether the model can learn meaningful low-dimensional embeddings that reflect cluster structure and how it represents sequences that span multiple classes. Due to the space limitations, we provide the details in Appendix \ref{appendix:toy_example}.

We generated sequences of length $100$ from multinomial distributions defining $5$ distinct classes, each with a characteristic $4$-mer compositions. To simulate ambiguity, we additionally generated $5$ "multi-class" sequences by combining $k$-mer counts from multiple classes, representing inputs that do not belong exclusively to a single class. This design allows us to evaluate how the model handles both well-separated clusters and overlapping class memberships.

Positive sequence pairs were formed by sampling sequences from the same class, while negative pairs were drawn across different classes, which introduces the possibility of false negatives. For multi-class sequences, pairs were also constructed with sequences from their contributing classes, allowing the model to learn representations that account for both pure and mixed memberships.

We learn the sequence embeddings in a $2$-dimensional latent space, enabling direct visualization of the learned geometry without relying on dimensionality reduction techniques that could distort structural relationships (Figure \ref{fig:toy_example_embs}). The resulting embeddings reveal well-separated clusters corresponding to distinct species, while sequences that belong to multiple classes occupy intermediate regions (black points). For each sequence, our model also predicts a diagonal covariance matrix, which quantifies the degree of uncertainty in its placement. This uncertainty is particularly shown for sequences belonging to multiple classes, as reflected in their larger covariance values in Figure \ref{fig:toy_example_var_distr}. In line with our theoretical results (Lemma \ref{lemma:bounds} and Theorem~\ref{thm:theorem}), the minimum variance across dimensions (i.e. $\min_d\{ (\phi_\sigma(\mathbf{s}))_d \}$) provides a lower bound on pairwise distances, and sequences associated with multiple classes indeed display a higher minimum variance. This confirms that the model not only separates clusters effectively, but also encodes the uncertainty of ambiguous cases.

\subsection{Metagenomics Binning}
We evaluate our methods on the metagenomic binning task, where the objective is to cluster sequences into species-level groups without prior knowledge of the number of clusters. In this regard, we adopt the modified K-Medoid algorithm of \citet{zhou2024dnabert}, which jointly estimates the cluster assignments and the underlying number of species. This setting is particularly challenging as it requires models to provide representations that are simultaneously discriminative and robust under unsupervised partitioning. For computing similarities between sequences, we employ cosine similarity for all genome-scale foundation models as well as the \textsc{KMers(cosine)} baseline. In contrast, we use an exponential kernel over the $\ell_1$ distance for \textsc{KMers($\ell_1$)}, and $\ell_2$ distance for \textsc{RevitKmers}. We use an exponential kernel over the generalized Mahalanobis term in Eq. \ref{equ:algebraic_simplification} as a natural choice for our model.

Following established evaluation strategies, we stratify clusters into $5$ quality tiers based on their $F_1$ scores. High-quality bins, defined as clusters with $F_1 > 0.9$, are highlighted in dark blue in Figure \ref{fig:Metagenomic_binning_results}. Across datasets, \textsc{\modelname} consistently outperforms its deterministic counterpart, with the sole exception of the \textsl{Plant-6} dataset (see Table \ref{tab:revisitkmers_vs_uncertaingen} for a detailed comparison). Moreover, while the strongest competing genome-scale foundation model, \textsc{DNABERT-S}, achieves slightly higher performance on the \textsl{Reference} dataset, our method surpasses it on the \textsl{Marine} dataset when focusing on high-quality bins. These results demonstrate that our approach is not only competitive with state-of-the-art foundation models but also offers the added benefit of a principled probabilistic formulation, enabling more robust and interpretable clustering in metagenomic settings with a smaller number of parameters.

\begin{table}
\caption{Detailed comparison of \textsc{RevisitKmers} and \textsc{\modelname}. The counts indicate the number of detected high-quality bins (i.e., number of clusters whose $F_1$-score is greater than $0.9$).}
\label{tab:revisitkmers_vs_uncertaingen}
\begin{tabular}{rcccccc}
\toprule
 & \textsl{Reference 5} & \textsl{Plant 5} & \textsl{Marine 5} & \textsl{Reference 6} & \textsl{Plant 6} & \textsl{Marine 6} \\\cmidrule(rl){2-2}\cmidrule(rl){3-3}\cmidrule(rl){4-4}\cmidrule(rl){5-5}\cmidrule(rl){6-6}\cmidrule(rl){7-7}
\textsc{RevisitKmers} & 126 & 29 & 112 & 128 & 28 & 125 \\
\textsc{\modelname} & 135 & 32 & 124 & 132 & 23 & 127\\\bottomrule
\end{tabular}
\end{table}
\begin{figure}
\centering
\includegraphics[width=\textwidth]{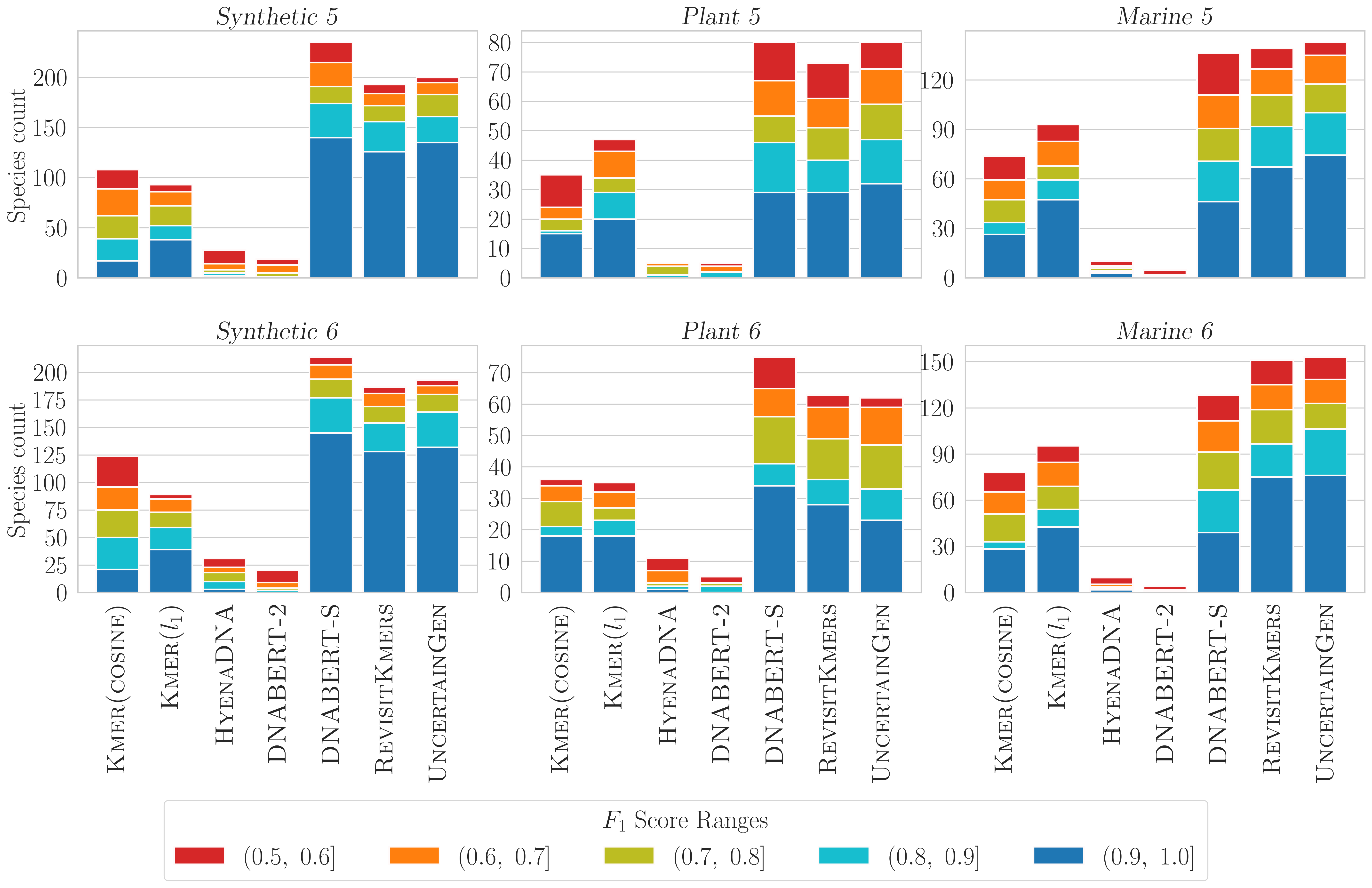}
\caption{Metagenomic binning results. Cluster counts are segmented by F1-score quality ranges. The dark blue portion highlights the highest-quality bins for each model-dataset combination.}
\label{fig:Metagenomic_binning_results}
\end{figure}

\subsection{Ablation Studies}
To better understand the behavior of our model, we perform a series of ablation studies. Due to space constraints, we discuss the results for the \textsl{Reference} datasets in the main text but a more comprehensive analysis across all datasets is provided in Appendix \ref{appendix:ablation_studies}.

\textbf{Distribution of variances.} We first analyze the distribution of the predicted variance terms for sequences in the test data. Specifically, we report $\sum_{d=1}\log\left(\phi_\sigma(\mathbf{s})_d + 1\right)$, which aggregates uncertainty across dimensions. As shown in Figure \ref{fig:log_determinant_distribution}, both variants of the \textsl{Reference} dataset exhibit multimodal distributions, suggesting that the model captures non-trivial heterogeneity in sequence-level uncertainty. This indicates that the variance estimates are not merely noise but encode meaningful structure about the underlying sequence distributions.

\textbf{Sequence filtering.} We filter out sequences with the largest log-determinant values to examine which sequences the model identifies as uncertain, and we report the number of clusters having a recall score $\geq 0.9$ (Figure \ref{fig:filtering_impact}). For comparison, we also include a random filtering baseline. To ensure fairness, removed sequences are assigned to a "garbage" label so that they do not artificially inflate false negatives. Our results show that filtering by model uncertainty consistently retains a larger number of high-recall clusters compared to random filtering. This shows that the model assigns higher uncertainty to sequences from low-quality bins or to those that would otherwise contribute to false negatives, thereby acting as an effective mechanism for uncertainty-aware sequence selection.

\textbf{Dimension size.} As discussed in our theoretical analysis (Section \ref{sec:model}), incorporating covariance terms provides the model with additional representational capacity compared to squared Euclidean distance: beyond capturing predictive uncertainty, the embedding space approximates a non-Euclidean Riemannian manifold. This enhanced geometry allows the model to better separate complex sequence structures. Empirical results in Figure \ref{fig:dimension_size_impact} support this intuition. We observe consistent improvements when covariance terms are included, with the gains being especially pronounced in lower-dimensional settings where representational bottlenecks are most restrictive.

\begin{figure}[t]
     \centering
     \begin{subfigure}[b]{0.32\textwidth}
         \centering
         \includegraphics[width=\textwidth]{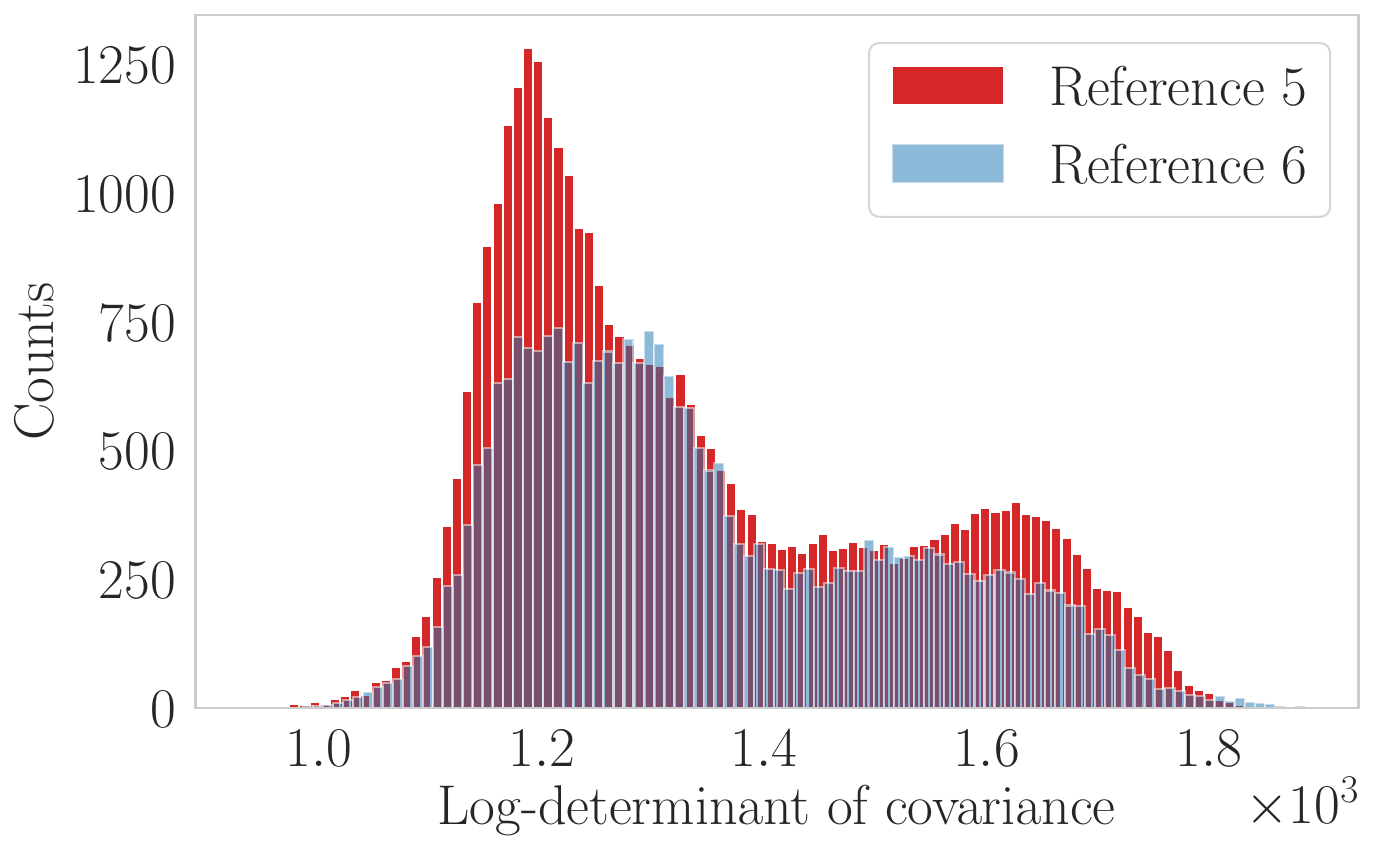}
         \caption{Log-determinant of variances.}
         \label{fig:log_determinant_distribution}
     \end{subfigure}
     \hfill
     \begin{subfigure}[b]{0.32\textwidth}
         \centering
         \includegraphics[width=\textwidth]{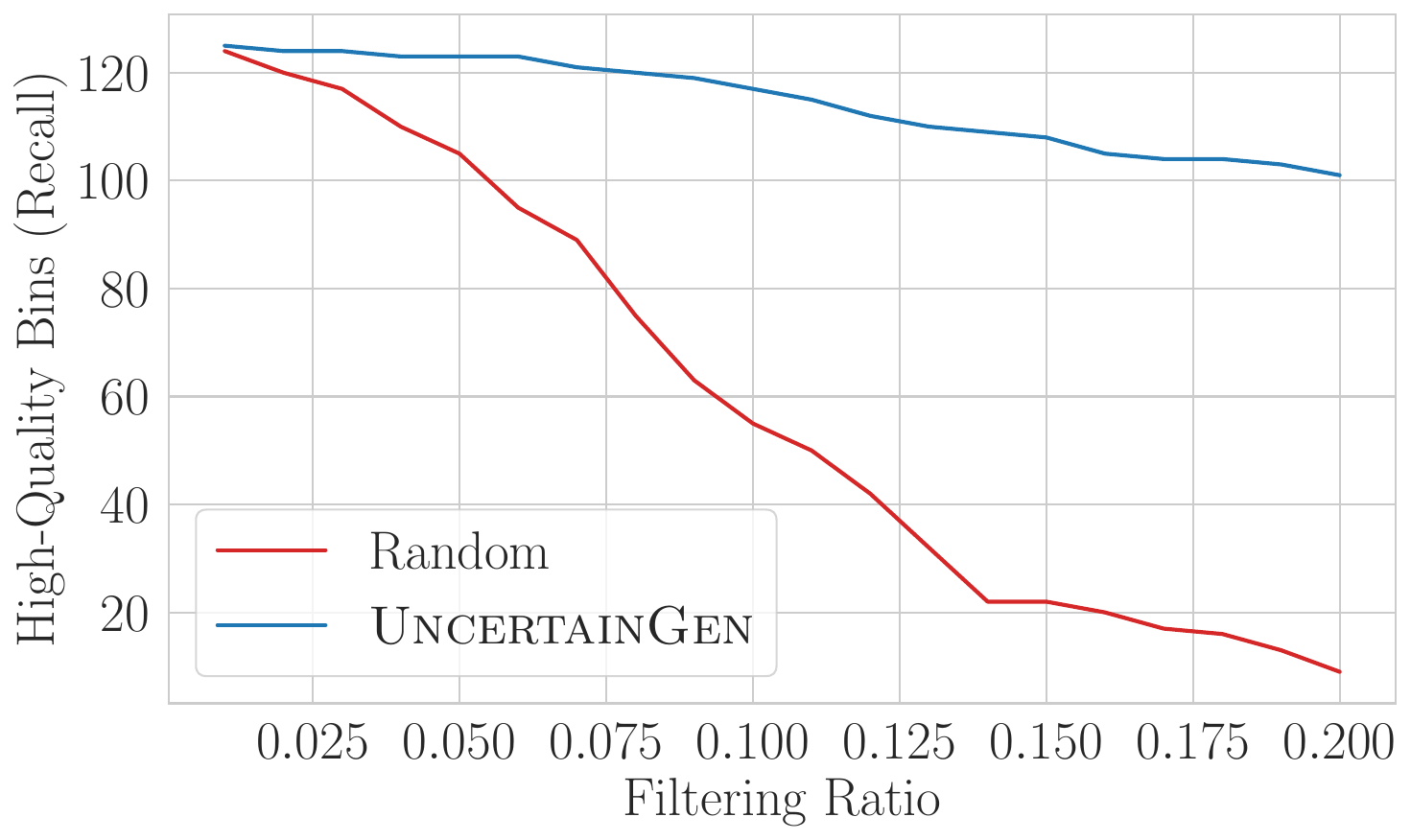}
         \caption{Sequence filtering}
         \label{fig:filtering_impact}
     \end{subfigure}
     \hfill
     \begin{subfigure}[b]{0.32\textwidth}
         \centering
         \includegraphics[width=\textwidth]{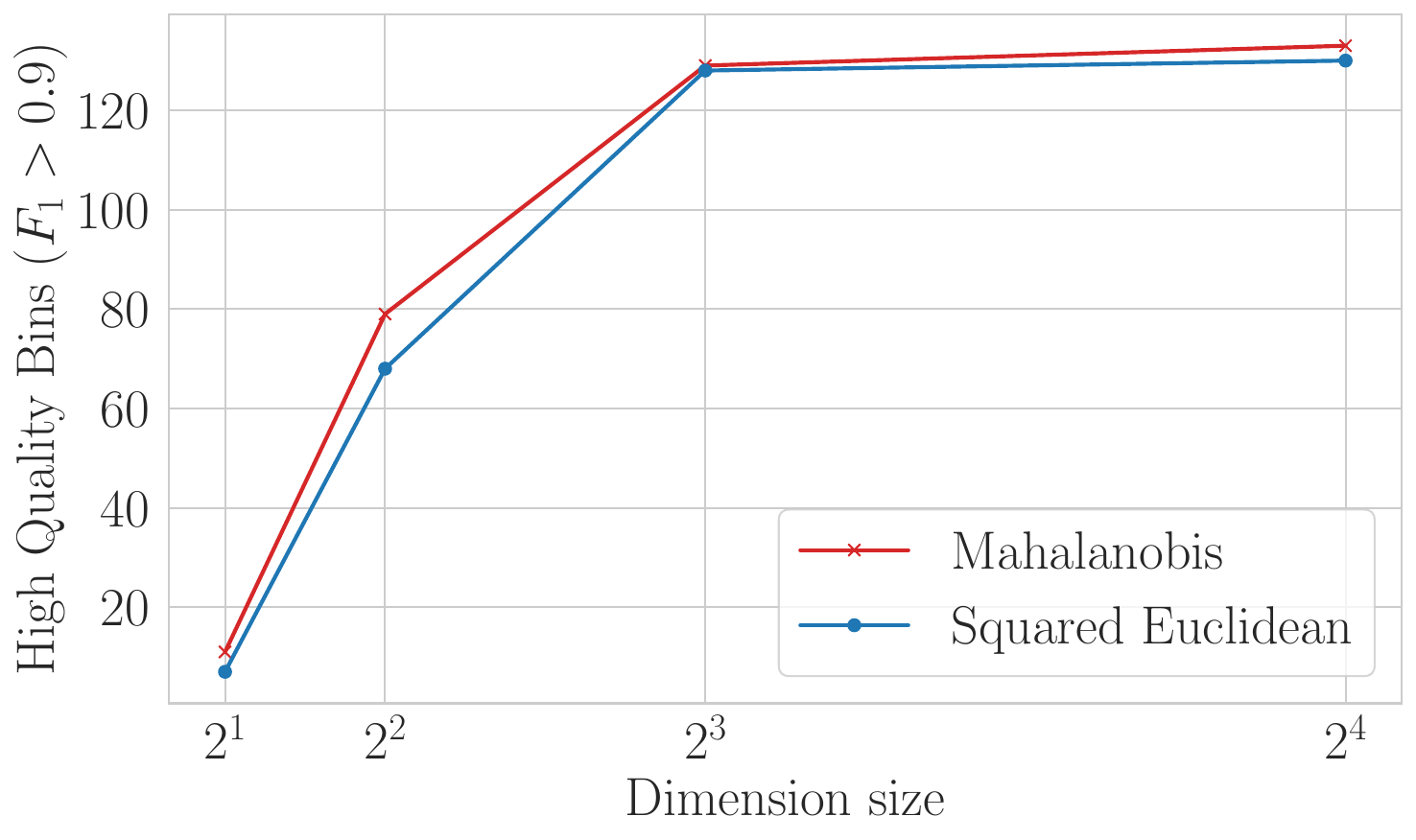}
         \caption{Impact of Dimension Size}
         \label{fig:dimension_size_impact}
     \end{subfigure}
     \caption{Ablations studies examining the behavior of the proposed \textsl{\modelname} model.}
\end{figure}

\section{Conclusions, Limitations and Future Works\label{sec:conclusion}}

We introduced \textsc{\modelname}, the first probabilistic embedding framework for metagenomic binning. Unlike deterministic k-mer or LLM-based representations, our method maps each DNA fragment to a distribution in latent space, explicitly encoding sequence-level uncertainty. Theoretical analysis showed how variance terms enlarge the feasible embedding space and improve pairwise distinguishability, while experiments on both synthetic and real metagenomic data demonstrated consistent gains in binning quality over strong deterministic baselines.

Our study also reveals some limitations. We relied on a simplified semi-supervised pairing strategy and two-layer networks; more expressive architectures or richer positive/negative sampling schemes may further enhance performance. In addition, although variance terms estimates uncertainty, whether these estimates are calibrated or not remains an open question. 

Future work will extend \textsc{\modelname} beyond k-mer based representations, explore hierarchical or non-Gaussian distributions for even richer uncertainty modeling, and integrate our embeddings into end-to-end pipelines for metagenome reconstruction. We hope this framework stimulates broader adoption of uncertainty-aware representations in computational genomics.

\subsubsection*{Acknowledgments}
This work was supported by a research grant (VIL50093) from VILLUM FONDEN.

\bibliography{references}
\bibliographystyle{iclr2026/iclr2026_conference}

\newpage
\appendix
\section{Appendix\label{sec:appendix}}

\subsection{The role of $\alpha$ in $\bfK_{ij}=\alpha(\bfS_i + \bfS_j)$}
\label{sec:alpha}
Consider the closed form expectation of Lemma~\ref{lemma:closed_form_expectation}:
\begin{align}\nonumber
&\mathbb{E}_{\substack{\mathbf{z}_i \sim \mathcal{N}\left(\mathbf{\mu}_i, \mathbf{S}_i \right) \\ \mathbf{z}_j \sim \mathcal{N}\left(\mathbf{\mu}_j, \mathbf{S}_j \right)}}\left[ \exp\left(- \frac{1}{2}(\mathbf{z}_i - \mathbf{z}_j)^{\top}\mathbf{K}_{ij}^{-1}(\mathbf{z}_i - \mathbf{z}_j) \right) \right]\nonumber
\\
&\qquad\qquad= \frac{ 1 }{\sqrt{ |\mathbf{K}_{ij}^{-1}(\mathbf{S}_{i}+\mathbf{S}_{j}) + \mathbf{I}| }}  \exp\left( -\frac{1}{2}(\mathbf{\mu}_{i} - \mathbf{\mu}_j)^{\top}\left(  \mathbf{S}_{i}+\mathbf{S}_{j} + \mathbf{K}_{ij}\right)^{-1}(\mathbf{\mu}_{i} - \mathbf{\mu}_j) \right).
\end{align}
By setting $\bfK_{ij}=\alpha(\bfS_i + \bfS_j)$ we have
\[
\begin{split}
    \frac{ 1 }{\sqrt{ |\mathbf{K}_{ij}^{-1}(\mathbf{S}_{i}+\mathbf{S}_{j}) + \mathbf{I}| }} &= \frac{ 1 }{\sqrt{ |\alpha^{-1} \bfI + \mathbf{I}| }} =\frac{ 1 }{\sqrt{ |(1+\alpha^{-1} \bfI| }} \\
    &=(1+\alpha^{-1})^{-D/2} 
\end{split}
\]
and
\[
\begin{split}
 -\frac{1}{2}(\mathbf{\mu}_{i} - &\mathbf{\mu}_j)^{\top}\left(  \mathbf{S}_{i}+\mathbf{S}_{j} + \mathbf{K}_{ij}\right)^{-1}(\mathbf{\mu}_{i} - \mathbf{\mu}_j) \\
 &=  -\frac{1}{2}(\mathbf{\mu}_{i} - \mathbf{\mu}_j)^{\top}\left(  \mathbf{S}_{i}+\mathbf{S}_{j} + \alpha(\bfS_i +\bfS_j)\right)^{-1}(\mathbf{\mu}_{i} - \mathbf{\mu}_j)\\
 &=  -\frac{1}{2}(\mathbf{\mu}_{i} - \mathbf{\mu}_j)^{\top}\left(  (1+\alpha)(\mathbf{S}_{i}+\mathbf{S}_{j})\right)^{-1}(\mathbf{\mu}_{i} - \mathbf{\mu}_j)\\
 &=  -\frac{1}{2(1+\alpha)}(\mathbf{\mu}_{i} - \mathbf{\mu}_j)^{\top}\left(  \mathbf{S}_{i}+\mathbf{S}_{j}\right)^{-1}(\mathbf{\mu}_{i} - \mathbf{\mu}_j).
\end{split}
\]
Hence
\[
\begin{split}
\frac{ 1 }{\sqrt{ |\mathbf{K}_{ij}^{-1}(\mathbf{S}_{i}+\mathbf{S}_{j}) + \mathbf{I}| }} & \exp\left( -\frac{1}{2}(\mathbf{\mu}_{i} - \mathbf{\mu}_j)^{\top}\left(  \mathbf{S}_{i}+\mathbf{S}_{j} + \mathbf{K}_{ij}\right)^{-1}(\mathbf{\mu}_{i} - \mathbf{\mu}_j) \right) \\
&= (1+\alpha^{-1})^{-D/2} \exp \left (-\frac{1}{2(1+\alpha)}(\mathbf{\mu}_{i} - \mathbf{\mu}_j)^{\top}\left(  \mathbf{S}_{i}+\mathbf{S}_{j}\right)^{-1}(\mathbf{\mu}_{i} - \mathbf{\mu}_j) \right )\\
&\rightarrow \begin{cases}
1 & \text{ as } \alpha\rightarrow \infty \\
0 & \text{ as } \alpha \rightarrow 0
\end{cases}.
\end{split}
\]

\subsection{Theoretical Analysis}\label{app:proofs}

\begin{lemma}(Closed-form expectation)
Let $\mathbf{z}_i \sim \mathcal{N}\left( \mathbf{\mu}_i, \mathbf{S}_i \right)$ and $\mathbf{z}_j \sim \mathcal{N}\left( \mathbf{\mu}_j, \mathbf{S}_j \right)$ be independent random variables. For a given positive definite matrix $\mathbf{K}_{ij} \succ 0$, the expectation term is equal to
\begin{align}
&\mathbb{E}_{\substack{\mathbf{z}_i \sim \mathcal{N}\left(\mathbf{\mu}_i, \mathbf{S}_i \right) \\ \mathbf{z}_j \sim \mathcal{N}\left(\mathbf{\mu}_j, \mathbf{S}_j \right)}}\left[ \exp\left(- \frac{1}{2}(\mathbf{z}_i - \mathbf{z}_j)^{\top}\mathbf{K}_{ij}^{-1}(\mathbf{z}_i - \mathbf{z}_j) \right) \right]\nonumber
\\
&\qquad\qquad= \frac{ 1 }{\sqrt{ |\mathbf{K}_{ij}^{-1}(\mathbf{S}_{i}+\mathbf{S}_{j}) + \mathbf{I}| }}  \exp\left( -\frac{1}{2}(\mathbf{\mu}_{i} - \mathbf{\mu}_j)^{\top}\left(  \mathbf{S}_{i}+\mathbf{S}_{j} + \mathbf{K}_{ij}\right)^{-1}(\mathbf{\mu}_{i} - \mathbf{\mu}_j) \right)
\end{align}
\end{lemma}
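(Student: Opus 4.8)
The plan is to collapse the double expectation into a single Gaussian integral and evaluate it by completing the square. First I would exploit independence: since $\mathbf{z}_i \sim \mathcal{N}(\mu_i, S_i)$ and $\mathbf{z}_j \sim \mathcal{N}(\mu_j, S_j)$ are independent, their difference $\mathbf{d} := \mathbf{z}_i - \mathbf{z}_j$ is Gaussian, $\mathbf{d} \sim \mathcal{N}(\mu, \Sigma)$ with $\mu := \mu_i - \mu_j$ and $\Sigma := S_i + S_j$. The target then reduces to the single integral $\mathbb{E}_{\mathbf{d}}\!\left[\exp\!\left(-\tfrac{1}{2}\mathbf{d}^\top K^{-1}\mathbf{d}\right)\right]$, writing $K := K_{ij}$ for brevity.

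Next I would write this expectation explicitly as an integral over $\mathbb{R}^D$ against the density of $\mathcal{N}(\mu,\Sigma)$, and merge the two quadratic forms in the exponent: the kernel $-\tfrac{1}{2}\mathbf{d}^\top K^{-1}\mathbf{d}$ and the log-density term $-\tfrac{1}{2}(\mathbf{d}-\mu)^\top\Sigma^{-1}(\mathbf{d}-\mu)$. Collecting terms yields a quadratic in $\mathbf{d}$ with Hessian $A := \Sigma^{-1} + K^{-1}$ and linear coefficient $\mu^\top\Sigma^{-1}\mathbf{d}$. Completing the square about the shifted mean $A^{-1}\Sigma^{-1}\mu$ converts the integrand into an unnormalized Gaussian, whose integration over $\mathbf{d}$ contributes a factor $(2\pi)^{D/2}|A|^{-1/2}$ and leaves a $\mathbf{d}$-independent exponential.

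For the determinant prefactor, combining $(2\pi)^{D/2}|A|^{-1/2}$ with the density's normalizer $(2\pi)^{-D/2}|\Sigma|^{-1/2}$ gives $|\Sigma A|^{-1/2} = |I + \Sigma K^{-1}|^{-1/2}$, and Sylvester's determinant identity $|I + \Sigma K^{-1}| = |I + K^{-1}\Sigma|$ rewrites this as the stated $|K^{-1}(S_i + S_j) + I|^{-1/2}$. The residual exponent simplifies to $-\tfrac{1}{2}\mu^\top\!\left(\Sigma^{-1} - \Sigma^{-1}A^{-1}\Sigma^{-1}\right)\!\mu$, and the decisive step is to identify the middle matrix: by the Woodbury identity, $\Sigma^{-1} - \Sigma^{-1}(\Sigma^{-1}+K^{-1})^{-1}\Sigma^{-1} = (\Sigma + K)^{-1}$, so the exponent becomes exactly $-\tfrac{1}{2}(\mu_i - \mu_j)^\top(S_i + S_j + K_{ij})^{-1}(\mu_i - \mu_j)$, matching the claim.

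The genuinely non-routine part is the pair of matrix identities, Sylvester's for the determinant prefactor and Woodbury's for the quadratic form, since the rest is standard bookkeeping of Gaussian integration. I would state both explicitly, and if a self-contained argument is preferred, verify the Woodbury form by checking that $(\Sigma + K)\left(\Sigma^{-1} - \Sigma^{-1}A^{-1}\Sigma^{-1}\right) = I$ through direct multiplication together with the definition $A = \Sigma^{-1} + K^{-1}$.
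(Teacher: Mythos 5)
Your proposal is correct and follows essentially the same route as the paper's proof: reduce to the Gaussian difference $\mathbf{z}_i-\mathbf{z}_j\sim\mathcal{N}(\mu_i-\mu_j,\,\mathbf{S}_i+\mathbf{S}_j)$, complete the square with Hessian $\mathbf{A}=\mathbf{S}_{ij}^{-1}+\mathbf{K}_{ij}^{-1}$, combine $|\mathbf{A}|\,|\mathbf{S}_{ij}|=|\mathbf{K}_{ij}^{-1}\mathbf{S}_{ij}+\mathbf{I}|$ for the prefactor, and apply the Woodbury identity to collapse the residual quadratic form to $(\mathbf{S}_{ij}+\mathbf{K}_{ij})^{-1}$. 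No gaps; your explicit flagging of Sylvester's identity and the direct-multiplication check of Woodbury are sensible additions but not a different argument.
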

\begin{proof}
Since $\mathbf{z}_i \sim \mathcal{N}\left( \mathbf{\mu}_i, \mathbf{S}_i \right)$ and $\mathbf{z}_j \sim \mathcal{N}\left( \mathbf{\mu}_j, \mathbf{S}_j \right)$ are independent random variables, the difference, $\mathbf{z}_i- \mathbf{z}_j$ is also normally distributed so we can write $\mathbf{z}_i - \mathbf{z}_j \sim \mathcal{N}\left(\mu_i-\mu_j, \mathbf{S}_i+\mathbf{S}_j \right)$. In other words, $\mathbf{z}_{ij} \sim \mathcal{N}\left(\mathbf{\mu}_{ij}, \mathbf{S}_{ij} \right)$ where $\mathbf{\mu}_{ij} := \mathbf{\mu}_i-\mathbf{\mu}_j$, $\mathbf{z}_{ij} := \mathbf{z}_i - \mathbf{z}_j$, and $\mathbf{S}_{ij} := \mathbf{S}_i + \mathbf{S}_j$, then we can write the expected term as follows:
\begin{align}
& \mathbb{E}_{\substack{\mathbf{z}_i \sim \mathcal{N}\left(\mathbf{\mu}_i, \mathbf{S}_i \right) \\ \mathbf{z}_j \sim \mathcal{N}\left(\mathbf{\mu}_j, \mathbf{S}_j \right)}}\left[ \exp\left(- \frac{1}{2}(\mathbf{z}_i - \mathbf{z}_j)^{\top}\mathbf{K}_{ij}^{-1}(\mathbf{z}_i - \mathbf{z}_j) \right) \right]
\\
&\quad = \mathbb{E}_{\mathbf{z}_i-\mathbf{z}_j \sim  \mathcal{N}\left(\mathbf{\mu}_{ij}, \mathbf{S}_{ij} \right)}\left[ \exp\left(- \frac{1}{2}(\mathbf{z}_i - \mathbf{z}_j)^{\top}\mathbf{K}_{ij}^{-1}(\mathbf{z}_i - \mathbf{z}_j) \right) \right]
\\
&\quad = \mathbb{E}_{\mathbf{z}_{ij} \sim  \mathcal{N}\left(\mathbf{\mu}_{ij}, \mathbf{S}{ij} \right)}\left[ \exp\left(- \frac{1}{2}\mathbf{z}_{ij}^{\top}\mathbf{K}_{ij}^{-1}\mathbf{z}_{ij} \right) \right]\label{eq:zij_expectation}
\\
&\quad = \int \frac{1}{\sqrt{(2\pi)^D|\mathbf{S}_{ij}|}}\exp\left( -\frac{1}{2}(\mathbf{z}_{ij} - \mathbf{\mu}_{ij})^{\top}\mathbf{S}_{ij}^{-1}(\mathbf{z}_{ij} - \mathbf{\mu}_{ij}) \right)\exp\left(-\frac{1}{2} \mathbf{z}_{ij}^{\top}\mathbf{K}_{ij}^{-1}\mathbf{z}_{ij} \right) \mathrm{d}{\mathbf{z}_{ij}}
\\
&\quad = \frac{1}{\sqrt{(2\pi)^D|\mathbf{S}_{ij}|}}\int \exp\left( -\frac{1}{2}\left( (\mathbf{z}_{ij} - \mathbf{\mu}_{ij})^{\top}\mathbf{S}_{ij}^{-1}(\mathbf{z}_{ij} - \mathbf{\mu}_{ij}) \right) + \mathbf{z}_{ij}^{\top}\mathbf{K}_{ij}^{-1}\mathbf{z}_{ij} \right) \mathrm{d}{\mathbf{z}_{ij}}
\end{align}

By expanding and regrouping the terms in the integral, we can write that:
\begin{align}
 &\int \exp\left( -\frac{1}{2}\left( (\mathbf{z}_{ij} - \mathbf{\mu}_{ij})^{\top}\mathbf{S}_{ij}^{-1}(\mathbf{z}_{ij} - \mathbf{\mu}_{ij}) + \mathbf{z}_{ij}^{\top}\mathbf{K}_{ij}^{-1}\mathbf{z}_{ij}\right) \right) \mathrm{d}{\mathbf{z}_{ij}}\label{eq:integral_deriv_step}
\\
&\quad= \int \exp\left( -\frac{1}{2}\left( \mathbf{z}_{ij}^{\top}\mathbf{K}_{ij}^{-1}\mathbf{z}_{ij} + \mathbf{z}_{ij}^{\top}\mathbf{S}_{ij}^{-1}\mathbf{z}_{ij} -2 \mathbf{z}_{ij}^{\top}\mathbf{S}_{ij}^{-1}\mu_{ij} +\mu_{ij}^{\top}\mathbf{S}_{ij}^{-1}\mu_{ij} \right) \right) \mathrm{d}{\mathbf{z}_{ij}}
\\
&\quad= \int \exp\left( -\frac{1}{2}\left( \mathbf{z}_{ij}^{\top}(\mathbf{K}_{ij}^{-1} + \mathbf{S}_{ij}^{-1})\mathbf{z}_{ij} -2 \mathbf{z}_{ij}^{\top}\mathbf{S}_{ij}^{-1}\mathbf{\mu}_{ij} + \mathbf{\mu}_{ij}^{\top}\mathbf{S}_{ij}^{-1}\mu_{ij} \right) \right) \mathrm{d}{\mathbf{z}_{ij}}
\\
&\quad= \int \exp\left( -\frac{1}{2}\left( \mathbf{z}_{ij}^{\top}\mathbf{A}_{ij}\mathbf{z}_{ij} -2 \mathbf{z}_{ij}^{\top}\mathbf{S}_{ij}^{-1}\mu_{ij} + \mu_{ij}^{\top}\mathbf{S}_{ij}^{-1}\mu_{ij} \right) \right) \mathrm{d}{\mathbf{z}_{ij}}
\\
&\quad= \int \exp\bigg( -\frac{1}{2}\Big( ( \mathbf{z}_{ij} -  \mathbf{A}_{ij}^{-1}\mathbf{S}_{ij}^{-1}\mathbf{\mu}_{ij} )^{\top}\mathbf{A}_{ij} ( \mathbf{z}_{ij} - \mathbf{A}_{ij}^{-1}\mathbf{S}_{ij}^{-1}\mathbf{\mu}_{ij} ) -\nonumber
\\
&\qquad\qquad\qquad\qquad\qquad\qquad\qquad\qquad\qquad\quad \mathbf{\mu}_{ij}^{\top}\mathbf{S}_{ij}^{-1}\mathbf{A}_{ij}^{-1}\mathbf{S}_{ij}^{-1}\mathbf{\mu}_{ij}  +  \mathbf{\mu}_{ij}^{\top}\mathbf{S}_{ij}^{-1}\mathbf{\mu}_{ij} \Big) \bigg) \mathrm{d}{\mathbf{z}_{ij}}\label{eq:integral_split}
\\
&\quad= \sqrt{(2\pi)^D |\mathbf{A}_{ij}^{-1}| } \exp\left( -\frac{1}{2}\left(  - \mu_{ij}^{\top}\mathbf{S}_{ij}^{-1}\mathbf{A}_{ij}^{-1}\mathbf{S}_{ij}^{-1}\mu_{ij}  +  \mathbf{\mu}_{ij}^{\top}\mathbf{S}_{ij}^{-1}\mathbf{\mu}_{ij} \right) \right)\mathrm{d}{\mathbf{z}_{ij}}
\end{align}
where $\mathbf{A}_{ij} := \mathbf{K}_{ij}^{-1} + \mathbf{S}_{ij}^{-1}$. In Eq. \ref{eq:integral_split}, we add and substract the term $\mathbf{\mu}_{ij}^{\top}\mathbf{S}_{ij}^{-1}\mathbf{A}_{ij}^{-1}\mathbf{S}_{ij}^{-1}\mathbf{\mu}_{ij}$ so that the first component depends only on $\mathbf{z}_{ij}$. It also corresponds to the numerator of a normal distribution with mean $\mathbf{A}_{ij}^{-1}\mathbf{S}_{ij}^{-1}\mathbf{\mu}_{ij}$ and covariance $\mathbf{A}_{ij}^{-1}$. Hence, the last equality follows from the standard Gaussian integral:
\begin{align*}
& \int \exp\left( -\frac{1}{2} ( \mathbf{z}_{ij} -  \mathbf{A}_{ij}^{-1}\mathbf{S}_{ij}^{-1}\mathbf{\mu}_{ij} )^{\top}\mathbf{A}_{ij} ( \mathbf{z}_{ij} - \mathbf{A}_{ij}^{-1}\mathbf{S}_{ij}^{-1}\mathbf{\mu}_{ij} )  \right)\mathrm{d}\mathbf{z}_{ij} = \sqrt{(2\pi)^D \left|\mathbf{A}_{ij}^{-1}\right| }
\end{align*}

Therefore, the expectation term in Eq. \ref{eq:zij_expectation} can then be rewritten as follows:
\begin{align}
&\mathbb{E}_{\substack{\mathbf{z}_i \sim \mathcal{N}\left(\mathbf{\mu}_i, \mathbf{S}_i \right) \\ \mathbf{z}_j \sim \mathcal{N}\left(\mathbf{\mu}_j, \mathbf{S}_j \right)}}\left[ \exp\left(- \frac{1}{2}(\mathbf{z}_i - \mathbf{z}_j)^{\top}\mathbf{K}_{ij}^{-1}(\mathbf{z}_i - \mathbf{z}_j) \right) \right]
\\
&\quad = \frac{1}{\sqrt{(2\pi)^D|\mathbf{S}_{ij}|}}\int \exp\left( -\frac{1}{2}\left( 2\mathbf{z}_{ij}^{\top}\mathbf{K}_{ij}^{-1}\mathbf{z}_{ij} +(\mathbf{z}_{ij} - \mathbf{\mu}_{ij})^{\top}\mathbf{S}_{ij}^{-1}(\mathbf{z}_{ij} - \mathbf{\mu}_{ij}) \right) \right) \mathrm{d}{\mathbf{z}_{ij}}
\\
&\qquad = \frac{1}{\sqrt{(2\pi)^D|\mathbf{S}_{ij}|}} \sqrt{(2\pi)^D |\mathbf{A}_{ij}^{-1}| } \exp\left( -\frac{1}{2}\left(  - \mathbf{\mu}_{ij}^{\top}\mathbf{S}_{ij}^{-1}\mathbf{A}_{ij}^{-1}\mathbf{S}_{ij}^{-1}\mu_{ij}  +  \mu_{ij}^{\top}\mathbf{S}_{ij}^{-1}\mu_{ij} \right) \right)
\\
&\qquad = \frac{ 1 }{\sqrt{ |\mathbf{A}_{ij}| |\mathbf{S}_{ij}|}}  \exp\left( -\frac{1}{2}\left(  - \mu_{ij}^{\top}\mathbf{S}_{ij}^{-1}\mathbf{A}_{ij}^{-1}\mathbf{S}_{ij}^{-1}\mu_{ij}  +  \mu_{ij}^{\top}\mathbf{S}_{ij}^{-1}\mu_{ij} \right) \right)
\end{align}

where $|\mathbf{A}_{ij}^{-1}| = |\mathbf{A}_{ij}|^{-1}$. By substituting $\mathbf{A}_{ij}$ with $(\mathbf{K}_{ij}^{-1} + \mathbf{S}_{ij}^{-1})$ and applying the Woodbury Matrix Identity, we can obtain
\begin{align}
&\mathbb{E}_{\substack{\mathbf{z}_i \sim \mathcal{N}\left(\mathbf{\mu}_i, \mathbf{S}_i \right) \\ \mathbf{z}_j \sim \mathcal{N}\left(\mathbf{\mu}_j, \mathbf{S}_j \right)}}\left[ \exp\left(- \frac{1}{2}(\mathbf{z}_i - \mathbf{z}_j)^{\top}\mathbf{K}_{ij}^{-1}(\mathbf{z}_i - \mathbf{z}_j) \right) \right]
\\
&\quad = \frac{ 1 }{\sqrt{ |\mathbf{A}_{ij}| |\mathbf{S}_{ij}|}}  \exp\left( -\frac{1}{2}\left(  - \mu_{ij}^{\top}\mathbf{S}_{ij}^{-1}\mathbf{A}_{ij}^{-1}\mathbf{S}_{ij}^{-1}\mu_{ij}  +  \mu_{ij}^{\top}\mathbf{S}_{ij}^{-1}\mu_{ij} \right) \right)
\\
&\quad = \frac{ 1 }{\sqrt{ |\mathbf{K}_{ij}^{-1} + \mathbf{S}_{ij}^{-1}| |\mathbf{S}_{ij}|}}  \exp\left( -\frac{1}{2}\left(  - \mu_{ij}^{\top}\mathbf{S}_{ij}^{-1}(\mathbf{K}_{ij}^{-1} + \mathbf{S}_{ij}^{-1})^{-1}\mathbf{S}_{ij}^{-1}\mu_{ij}  +  \mu_{ij}^{\top}\mathbf{S}_{ij}^{-1}\mu_{ij} \right) \right)
\\
&\quad = \frac{ 1 }{\sqrt{ |\mathbf{K}_{ij}^{-1}\mathbf{S}_{ij} + \mathbf{I}| }}  \exp\left( -\frac{1}{2}\mu_{ij}^{\top}\left(  \mathbf{S}_{ij}^{-1} - \mathbf{S}_{ij}^{-1}(\mathbf{K}_{ij}^{-1} + \mathbf{S}_{ij}^{-1})^{-1}\mathbf{S}_{ij}^{-1} \right)\mu_{ij} \right)
\\
&\quad = \frac{ 1 }{\sqrt{ |\mathbf{K}_{ij}^{-1}\mathbf{S}_{ij} + \mathbf{I}| }}  \exp\left( -\frac{1}{2}\mu_{ij}^{\top}\left(  \mathbf{S}_{ij} + \mathbf{K}_{ij}\right)^{-1}\mu_{ij} \right)
\end{align}

Finally, by substituting the terms, $\mathbf{\mu}_{ij}$ and $\mathbf{S}_{ij}$, we can conclude that

\begin{align}
&\mathbb{E}_{\substack{\mathbf{z}_i \sim \mathcal{N}\left(\mathbf{\mu}_i, \mathbf{S}_i \right) \\ \mathbf{z}_j \sim \mathcal{N}\left(\mathbf{\mu}_j, \mathbf{S}_j \right)}}\left[ \exp\left(- \frac{1}{2}(\mathbf{z}_i - \mathbf{z}_j)^{\top}\mathbf{K}_{ij}^{-1}(\mathbf{z}_i - \mathbf{z}_j) \right) \right]
\\
&\quad = \frac{ 1 }{\sqrt{ |\mathbf{K}_{ij}^{-1}(\mathbf{S}_{i}+\mathbf{S}_{j}) + \mathbf{I}| }}  \exp\left( -\frac{1}{2}(\mathbf{\mu}_{i} - \mathbf{\mu}_j)^{\top}\left(  \mathbf{S}_{i}+\mathbf{S}_{j} + \mathbf{K}_{ij}\right)^{-1}(\mathbf{\mu}_{i} - \mathbf{\mu}_j) \right)
\end{align}

\end{proof}

\begin{lemma}
Let $\epsilon \in (0, 1/2 )$, and let $\phi: \mathcal{S} \to  \mathbb{R}^D \times \mathbb{R}^D_{+}$ be an $\epsilon$-distinguishable embedding function for a pair $(\mathbf{s}_i,\mathbf{s}_j)\in\mathcal{S}^2$ and label $y_{ij} \in \{0,1\}$ where $\phi:= (\phi_{\mu}, \phi_{\sigma})$ with $\phi_{\mu}: \mathcal{S} \to \mathbb{R}^D$ and $ \phi_{\sigma}: \mathcal{S} \to \mathbb{R}^D_{+}$. Then the following bounds hold:
\begin{align}
\min_{d}\left\{ (\phi_\sigma(\mathbf{s}_i) + \phi_\sigma(\mathbf{s}_j))_d \right\}
 \log\left( \frac{1}{ \epsilon^4 }  \right)
&\leq \|\phi_\mu(\mathbf{s}_i) - \phi_\mu(\mathbf{s}_j)\|^2_2 
&&\text{if $y_{ij}=0$,}
\\
\max_{d}\left\{ (\phi_\sigma(\mathbf{s}_i) + \phi_\sigma(\mathbf{s}_j))_d \right\}
 \log\left( \frac{1}{(1-\epsilon)^{4}} \right)
&\geq \|\phi_\mu(\mathbf{s}_i) - \phi_\mu(\mathbf{s}_j)\|^2_2 
&&\text{if $y_{ij}=1$.}
\end{align}
\end{lemma}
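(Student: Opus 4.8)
The plan is to reduce both inequalities to a single two-sided estimate on the Mahalanobis-type quadratic form that defines the renormalized success probability $q$, and then convert that quadratic form into the Euclidean distance $\|\phi_\mu(\mathbf{s}_i)-\phi_\mu(\mathbf{s}_j)\|_2^2$ via the extremal eigenvalues of the diagonal covariance sum. Throughout I abbreviate $\Delta := \phi_\mu(\mathbf{s}_i)-\phi_\mu(\mathbf{s}_j)$ and $\Sigma := \mathrm{diag}\big(\phi_\sigma(\mathbf{s}_i)+\phi_\sigma(\mathbf{s}_j)\big)\succ 0$, so that $\mathbf{S}_i+\mathbf{S}_j=\Sigma$ and the $d$-th diagonal entry is $(\phi_\sigma(\mathbf{s}_i)+\phi_\sigma(\mathbf{s}_j))_d$.

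First I would make the success probability explicit. By Eq.~\ref{equ:algebraic_simplification} the marginalized likelihood equals $\tfrac{1}{\sqrt{2^D}}\exp\!\big(-\tfrac14\,\Delta^{\top}\Sigma^{-1}\Delta\big)$, and after the renormalization by $\sqrt{2^D}$ described in the main text the success probability becomes $q(Y_{ij}=1\mid \mathbf{s}_i,\mathbf{s}_j)=\exp\!\big(-\tfrac14\,\Delta^{\top}\Sigma^{-1}\Delta\big)$. Since $q$ is a genuine Bernoulli probability, $q(Y_{ij}=0\mid\cdots)=1-q(Y_{ij}=1\mid\cdots)$. Next I would unfold Definition~\ref{def:distinguishable} in each label case. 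For $y_{ij}=0$ the requirement $q(Y_{ij}=0\mid\cdots)\ge 1-\epsilon$ is equivalent to $q(Y_{ij}=1\mid\cdots)\le\epsilon$; taking logarithms of the closed form gives $\tfrac14\,\Delta^{\top}\Sigma^{-1}\Delta\ge\log(1/\epsilon)$, i.e. $\Delta^{\top}\Sigma^{-1}\Delta\ge\log(1/\epsilon^{4})$. For $y_{ij}=1$ the requirement $q(Y_{ij}=1\mid\cdots)\ge 1-\epsilon$ gives $\tfrac14\,\Delta^{\top}\Sigma^{-1}\Delta\le\log\!\big(1/(1-\epsilon)\big)$, i.e. $\Delta^{\top}\Sigma^{-1}\Delta\le\log\!\big(1/(1-\epsilon)^{4}\big)$. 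The restriction $\epsilon\in(0,1/2)$ keeps both logarithms finite and positive.

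The remaining step is purely linear-algebraic. Because $\Sigma$ is diagonal and positive definite, $\Sigma^{-1}$ has eigenvalues $1/(\phi_\sigma(\mathbf{s}_i)+\phi_\sigma(\mathbf{s}_j))_d$, so the Rayleigh-quotient bounds give $\tfrac{\|\Delta\|_2^2}{\max_d(\phi_\sigma(\mathbf{s}_i)+\phi_\sigma(\mathbf{s}_j))_d}\le \Delta^{\top}\Sigma^{-1}\Delta\le\tfrac{\|\Delta\|_2^2}{\min_d(\phi_\sigma(\mathbf{s}_i)+\phi_\sigma(\mathbf{s}_j))_d}$. Combining the upper Rayleigh bound (the one carrying $\min_d$) with the $y_{ij}=0$ inequality yields $\|\Delta\|_2^2\ge \min_d(\phi_\sigma(\mathbf{s}_i)+\phi_\sigma(\mathbf{s}_j))_d\,\log(1/\epsilon^{4})$, and combining the lower Rayleigh bound (carrying $\max_d$) with the $y_{ij}=1$ inequality yields $\|\Delta\|_2^2\le \max_d(\phi_\sigma(\mathbf{s}_i)+\phi_\sigma(\mathbf{s}_j))_d\,\log\!\big(1/(1-\epsilon)^{4}\big)$, which are exactly the two claimed bounds.

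There is no substantive obstacle here; the proof is essentially a sign-and-log bookkeeping exercise wrapped around a Rayleigh-quotient sandwich. The only points demanding care are tracking the factor of $4$ that emerges when exponentiating the logarithm of $q$, and correctly pairing each Rayleigh direction with its label case—the $\min_d$ (upper) bound must meet the lower bound on the quadratic form coming from $y_{ij}=0$, while the $\max_d$ (lower) bound must meet the upper bound coming from $y_{ij}=1$—so that both resulting inequalities point in the direction asserted by the lemma.
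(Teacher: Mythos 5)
Your proof is correct and follows essentially the same route as the paper's: unfold the $\epsilon$-distinguishability condition via the closed-form $q(Y_{ij}=1\mid\cdots)=\exp\bigl(-\tfrac14\Delta^{\top}\Sigma^{-1}\Delta\bigr)$ to get the two-sided bound on the quadratic form, then pass to the Euclidean norm using the extremal diagonal entries of $\Sigma$. The paper writes the Rayleigh-quotient step out by hand (multiplying the sum $\sum_d \mu_d^2/\sigma_d^2$ by $\min_d\{\sigma_d^2\}$ or $\max_d\{\sigma_d^2\}$ and bounding termwise), but this is the same argument you give, with the same pairing of $\min_d$ with the negative-pair case and $\max_d$ with the positive-pair case.
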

\begin{proof}
Let us define $\sigma^2 := \phi_\sigma(\mathbf{s}_i) + \phi_\sigma(\mathbf{s}_j)$ and $\mathbf{\mu} = \phi_\mu(\mathbf{s}_i) - \phi_\mu(\mathbf{s}_j) $. We first establish the lower bound for a negative pair ($y_{ij}=0$). 
By $\epsilon$-distinguishability, we assume that
$q(Y_{ij}=0\mid \cdots)\geq 1-\epsilon$ then $\exp\left( -\tfrac{1}{4}\,\mathbf{\mu}^\top \mathbf{S}_{ij}^{-1} \mathbf{\mu} \right) \leq \epsilon$, where $\mathbf{S}_{ij} := \text{diag}(\sigma^2)$.  Hence, $\mathbf{\mu}^\top \mathbf{S}_{ij}^{-1}\mathbf{\mu} \geq -4\log\left(\epsilon\right)$, and we can obtain
\begin{align}
\min_{d} \{ \sigma_d^2 \} \log\left( \epsilon^{-4} \right)
&\leq \min_{d}\{ \sigma_d^2 \}\,\mathbf{\mu}^\top \mathbf{S}_{ij}^{-1}\mathbf{\mu}
\\
&= \min_{d}\{ \sigma_d^2 \}\,\sum_{d=1}^D \frac{\mu_d^2}{ \sigma_d^2 } \\
&\leq \min_{d}\{ \sigma_d^2 \}
\left( \frac{1}{\min_{d}\{ \sigma_d^2 \}}\sum_{d=1}^D \mu_d^2 \right)
\\
&= \|\phi_\mu(\mathbf{s}_i)-\phi_\mu(\mathbf{s}_j)\|^2_2.
\end{align}
Since $\epsilon \in (0,1/2)$, the left-hand side is strictly positive, giving a nontrivial lower bound.

For a positive pair ($y_{ij}=1$), similarly, we have
$\exp\left( -\frac{1}{4}\,\mathbf{\mu}^\top \mathbf{S}_{ij}^{-1} \mathbf{\mu} \right) \geq 1-\epsilon$, which implies $\mathbf{\mu}^\top \mathbf{S}_{ij}^{-1}\mathbf{\mu} \leq 4\log\left( \frac{1}{1-\epsilon} \right)$. Then, we can write
\begin{align}
\max_{d}\{\sigma_d^2\}\,\log\left( \frac{1}{(1-\epsilon)^4} \right)
&\geq \max_{d}\{\sigma_d^2\}\,\mathbf{\mu}^\top \mathbf{S}_{ij}^{-1}\mathbf{\mu}
\\
&= \max_{d}\{\sigma_d^2\}\,\sum_{d=1}^D \frac{\mu_d^2}{\sigma_d^2}
\\
&\geq \max_{d}\{\sigma_d^2\}\,\left(\frac{1}{\max_{d}\{\sigma_d^2\}}\sum_{d=1}^D \mu_d^2 \right)
\\
&= \sum_{d=1}^D \mu_d^2
\\
&= \|\phi_\mu(\mathbf{s}_i)-\phi_\mu(\mathbf{s}_j)\|^2_2.
\end{align}
This establishes the upper bound for the embedding distances when $y_{ij}=1$.
\end{proof}

\begin{lemma}\label{lemma:packing_number}
Let $\tau > 0$ and $\mathbf{z}_0 \in \mathbb{R}^D$. Then, there exist at most $\mathcal{P}^{\mathcal{D}}_\tau$ distinct points $\{\mathbf{z}_i\}_{i\geq 1} \subset \mathbb{R}^D$ such that
\begin{align}
\| \mathbf{z}_i - \mathbf{z}_0 \| \leq \tau \qquad \text{and} \qquad \| \mathbf{z}_i - \mathbf{z}_j \| \geq \tau \quad \text{for all } 1 \le i < j \le \mathcal{P}^{\mathcal{D}}_\tau,
\end{align}
where $\mathcal{P}^{\mathcal{D}}_\tau$ is the \emph{packing number} of a unit Euclidean ball in $\mathbb{R}^D$.
\end{lemma}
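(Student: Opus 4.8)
The plan is to recognize that the two hypotheses place the points $\{\mathbf{z}_i\}$ in exactly the configuration whose maximal size \emph{defines} the packing number, and then to discharge the statement by invariance of the Euclidean metric under translation and positive scaling. Indeed, $\|\mathbf{z}_i-\mathbf{z}_0\|\le\tau$ says every $\mathbf{z}_i$ lies in the closed ball $B(\mathbf{z}_0,\tau)$ of radius $\tau$, while $\|\mathbf{z}_i-\mathbf{z}_j\|\ge\tau$ for $i\neq j$ says the collection is $\tau$-separated; a finite upper bound on the number of such points is precisely what $\mathcal{P}^D_\tau$ quantifies.

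First I would translate to the origin: since $\mathbf{z}\mapsto\mathbf{z}-\mathbf{z}_0$ preserves all pairwise distances, we may assume $\mathbf{z}_0=\mathbf{0}$, giving $\|\mathbf{z}_i\|\le\tau$ and $\|\mathbf{z}_i-\mathbf{z}_j\|\ge\tau$. Next I would rescale by setting $\mathbf{w}_i := \mathbf{z}_i/\tau$; since multiplication by $1/\tau>0$ scales every distance by the common factor $1/\tau$, the images satisfy $\|\mathbf{w}_i\|\le 1$ and $\|\mathbf{w}_i-\mathbf{w}_j\|\ge 1$. Thus $\{\mathbf{w}_i\}$ is a $1$-separated subset of the closed unit ball, and because $\mathbf{z}_i\mapsto\mathbf{w}_i$ is a bijection the two collections have the same cardinality.

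By the definition of $\mathcal{P}^D_\tau$ as the largest number of $1$-separated points contained in the unit ball, the collection $\{\mathbf{w}_i\}$, and hence $\{\mathbf{z}_i\}$, contains at most $\mathcal{P}^D_\tau$ points. This also reconciles the radius-$\tau$ description of the packing number used earlier with the unit-ball description in the statement: the packing number depends only on the ratio of separation to radius, which equals $1$ in both formulations, so the two agree. I do not expect a genuine obstacle here; the only point deserving care is that the affine change of variables (translation followed by a positive dilation) is a bijection scaling all distances uniformly, so it preserves the containment and separation constraints together and therefore the count. Finiteness of $\mathcal{P}^D_\tau$, should it be needed, is the standard volumetric packing bound, but it is already treated as a known finite quantity.
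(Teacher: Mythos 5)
Your proof is correct and follows essentially the same route as the paper's, which simply observes that the hypotheses place the points in exactly the configuration whose maximal cardinality defines $\mathcal{P}^D_\tau$. Your added translation-and-rescaling step is a welcome bit of extra care that reconciles the radius-$\tau$ and unit-ball formulations of the packing number, which the paper glosses over, but it does not change the substance of the argument.
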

\begin{proof}
By definition, the packing number, $\mathcal{P}^D_\tau$, is the maximal number of points that can fit in $B(\mathbf{z}_0, \tau)$ such that any two points are at least $\tau$ apart. Therefore, any set of points in $B(\mathbf{z}_0, \tau)$ satisfying 
$\| \mathbf{z}_i - \mathbf{z}_j \| \ge \tau$ can contain at most $\mathcal{P}^D_\tau$ points.
\end{proof}

\begin{corollary}\label{corollary:}
Let $\phi: \mathcal{S} \to  \mathbb{R}^D \times \mathbb{R}^D_{+}$ be an embedding function for the set $\mathcal{S}$ where $\phi:= (\phi_{\mu}, \phi_{\sigma})$ with $\phi_{\mu}: \mathcal{S} \to \mathbb{R}^D$ and $\phi_{\sigma}: \mathcal{S} \to \mathbb{R}^D_{+}$. If $\phi_{\sigma}(\mathbf{s}_i)_d=\phi_{\sigma}(\mathbf{s}_j)_d$ for all $\mathbf{s}_i,\mathbf{s}_j\in\mathcal{S}$, and $\forall d\in[D]$, and if there exists $\mathcal{P}^D_\tau + 2$ sequences, $\mathbf{s}_{0},\mathbf{s}_{1},\ldots,\mathbf{s}_{\mathcal{P}^D_\tau+1} \in \mathcal{S}$ such that each $(\mathbf{s}_{0},\mathbf{s}_i)$ is a positive pair (i.e. $y_{(0,i)} = 1)$ for all $i\in \{1,\ldots \mathcal{P}^{D}_\tau+1\}$ and $(\mathbf{s}_{i}, \mathbf{s}_{j})$ is a negative pair (i.e. $y_{(i,j)}=0$) for $1 \leq i < j \leq \mathcal{P}_{D}+1$, then it cannot be $\epsilon$-distinguishable function for $\epsilon \in (0, 1/2)$.
\end{corollary}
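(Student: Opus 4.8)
The plan is to reduce the claim to a purely Euclidean packing argument, exploiting that the covariance output is pinned across sequences. Writing $c_d := \phi_\sigma(\mathbf{s})_d$, the hypothesis $\phi_\sigma(\mathbf{s}_i)_d=\phi_\sigma(\mathbf{s}_j)_d$ says $c_d$ is independent of the sequence, so for \emph{every} pair the relevant matrix $\mathbf{S}_{ij} := \mathrm{diag}\big(\phi_\sigma(\mathbf{s}_i)+\phi_\sigma(\mathbf{s}_j)\big) = 2\,\mathrm{diag}(c_1,\dots,c_D)$ is one fixed positive-definite matrix $\mathbf{S}$. First I would introduce the whitening map $\tilde{\mu}_i := \mathbf{S}^{-1/2}\phi_\mu(\mathbf{s}_i)$. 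Because the same $\mathbf{S}^{-1/2}$ works for all pairs simultaneously, the renormalized success probability (Eq.~\ref{equ:algebraic_simplification} rescaled by $\sqrt{2^D}$) collapses to the purely Euclidean form
\[
q(Y_{ij}=1\mid \mathbf{s}_i,\mathbf{s}_j) = \exp\!\Big(-\tfrac14\,\|\tilde{\mu}_i-\tilde{\mu}_j\|_2^2\Big).
\]
This is exactly the step that makes the possibly-unequal per-dimension variances harmless, since a single linear change of coordinates removes the covariance.

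Next I would translate $\epsilon$-distinguishability into explicit metric constraints on the whitened means. For each positive pair $(\mathbf{s}_0,\mathbf{s}_i)$, the requirement $q\ge 1-\epsilon$ yields $\|\tilde{\mu}_0-\tilde{\mu}_i\|_2 \le \tau_+ := 2\sqrt{\log(1/(1-\epsilon))}$, while for each negative pair $(\mathbf{s}_i,\mathbf{s}_j)$ the requirement $q(Y_{ij}=0)\ge 1-\epsilon$ (equivalently $q(Y_{ij}=1)\le\epsilon$) yields $\|\tilde{\mu}_i-\tilde{\mu}_j\|_2 \ge \tau_- := 2\sqrt{\log(1/\epsilon)}$. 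The crucial comparison is $\tau_+ < \tau_-$: because $\epsilon\in(0,1/2)$ we have $\epsilon < 1-\epsilon$, hence $1/(1-\epsilon) < 1/\epsilon$ and therefore $\tau_+ < \tau_-$.

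The final step is the packing contradiction. The $\mathcal{P}^D_\tau+1$ whitened means $\tilde{\mu}_1,\dots,\tilde{\mu}_{\mathcal{P}^D_\tau+1}$ all lie in the ball $B(\tilde{\mu}_0,\tau_+)\subseteq B(\tilde{\mu}_0,\tau_-)$ (using $\tau_+<\tau_-$) and are pairwise at least $\tau_-$ apart, hence in particular distinct. Applying Lemma~\ref{lemma:packing_number} with common radius/separation $\tau_-$, together with the scale-invariance of the packing number (so $\mathcal{P}^D_{\tau_-}=\mathcal{P}^D_\tau$, the packing number of a unit ball), at most $\mathcal{P}^D_\tau$ such points can exist. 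This contradicts having $\mathcal{P}^D_\tau+1$ of them, so no $\epsilon$-distinguishable $\phi$ can satisfy the stated pairwise pattern.

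I expect the main obstacle to be the reduction rather than the counting. One must verify that the whitening transformation legitimately produces the clean Euclidean exponent for all pairs at once, which hinges essentially on the fixed-variance hypothesis (the same $\mathbf{S}^{-1/2}$ must serve every pair); a direct appeal to Lemma~\ref{lemma:bounds} would instead leave a $\max_d$-versus-$\min_d$ mismatch between the positive and negative bounds, and this gap is precisely what the transformation eliminates. The second delicate point is the strict ordering $\tau_+<\tau_-$ extracted from $\epsilon<1/2$, which is exactly what lets the radius-$\tau_+$ ball sit inside the radius-$\tau_-$ ball required by the packing-number definition.
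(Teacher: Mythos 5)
Your proof is correct, and the skeleton is the same as the paper's: translate $\epsilon$-distinguishability into a ``positive pairs within radius $\rho_+$, negative pairs separated by $\rho_-$, with $\rho_+<\rho_-$'' configuration and then invoke the packing bound of Lemma~\ref{lemma:packing_number} to contradict the existence of $\mathcal{P}^D_\tau+1$ such points around $\phi_\mu(\mathbf{s}_0)$. The difference is in how you get there. The paper sets a single scalar $\sigma^2:=\phi_\sigma(\mathbf{s}_i)_d$, applies Lemma~\ref{lemma:bounds}, and splits the two thresholds with the midpoint $\tau_\epsilon$ of $\log(\epsilon^{-4})$ and $\log((1-\epsilon)^{-4})$; this implicitly treats the variance as constant across dimensions as well as across sequences. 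Your whitening map $\tilde{\mu}_i=\mathbf{S}^{-1/2}\phi_\mu(\mathbf{s}_i)$ exploits only what the hypothesis actually grants --- that the matrix $\mathbf{S}_{ij}$ is the \emph{same} for every pair, not that it is a multiple of the identity --- and reduces the exponent to a genuine Euclidean norm before any bounding. This is a real gain: as you note, going through Lemma~\ref{lemma:bounds} with dimension-dependent $c_d$ leaves a $\max_d\{c_d\}$ upper bound for positives against a $\min_d\{c_d\}$ lower bound for negatives, and when $\max_d c_d/\min_d c_d>\log(\epsilon^{-4})/\log((1-\epsilon)^{-4})$ those bounds no longer force $\rho_+<\rho_-$, so the paper's route would stall in the anisotropic case that the corollary's hypothesis permits. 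Your derivation of $\tau_+=2\sqrt{\log(1/(1-\epsilon))}<2\sqrt{\log(1/\epsilon)}=\tau_-$ from $\epsilon<1/2$, and the appeal to scale-invariance of $\mathcal{P}^D_\tau$, are both sound. The only cosmetic caveat is that the conclusion is a contradiction with the \emph{assumed} distinguishability of the given family of pairs, so the statement you ultimately prove is the contrapositive form the corollary asserts; your write-up already frames it that way.
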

\begin{proof}
Assume for contradiction that such an $\epsilon$-distinguishable function $\phi: \mathcal{S} \to \mathbb{R}^D \times \mathbb{R}^D_{+}$ exists for some $\epsilon \in (0, 1/2)$ with constant variance terms $\sigma^2 := \phi_{\sigma}(\mathbf{s}_i)_d=\phi_{\sigma}(\mathbf{s}_j)_d$ for all $(\mathbf{s}_i,\mathbf{s}_j)\in\mathcal{S}^2$ and $d\in[D]$ so Lemma \ref{lemma:bounds} implies that
\begin{align}
\sigma^2\log\left( \frac{1}{\epsilon^4} \right)
&\leq \|\phi_\mu(\mathbf{s}_i) - \phi_\mu(\mathbf{s}_j)\|^2_2 
&&\text{if $y_{ij}=0$}
\\
\sigma^2\log\left( \frac{1}{(1-\epsilon)^4} \right)
&\geq \|\phi_\mu(\mathbf{s}_i) - \phi_\mu(\mathbf{s}_j)\|^2_2 
&&\text{if $y_{ij}=1$.}
\end{align}
for all $(\mathbf{s}_i,\mathbf{s}_j)\in \mathcal{S}$ pairs. Note that we have
\begin{align}
\log\left(\frac{1}{\epsilon^4}\right) - \log\left( \frac{1}{(1-\epsilon)^4} \right) = \log\left(\frac{(1-\epsilon)^4}{\epsilon^4}\right) = 4\log\left(  \epsilon^{-1} - 1 \right) > 0   
\end{align}
so let's define $\tau_{\epsilon} := \left( \log(\epsilon^{-4})  + \log((1-\epsilon)^{-4}) \right ) / 2$, then we can write
\begin{align}
\| \phi_{\mu}(\mathbf{s}_{i}) - \phi_{\mu}(\mathbf{s}_{j}) \|_2 > \sigma\sqrt{\tau_\epsilon},\label{eq:lower_bound_condition}
\end{align}
for all negative pairs $(\mathbf{s}_{i}, \mathbf{s}_{j})$ $1 \leq i < j \leq \mathcal{P}^{D}+1$ and we have
\begin{align}
	\| \phi_{\mu}(\mathbf{s}_{0}) - \phi_{\mu}(\mathbf{s}_{i}) \|_2 < \sigma\sqrt{\tau_\epsilon}
\end{align}
for all positive pairs $(\mathbf{s}_{0}, \mathbf{s}_{i})$ where $1 \leq i\leq \mathcal{P}^{D}_\tau + 1$. In other words, each $\phi_{\mu}(\mathbf{s}_{i})$ $(i\geq 1)$ lies within a ball of radius $\sigma\sqrt{\tau_\epsilon}$ centered at $\phi_{\mu}(\mathbf{s}_0)$. However, the condition in Eq. \ref{eq:lower_bound_condition} requires that all negative pairs have to be at least $\sigma\sqrt{\tau_\epsilon}$ apart from each other at the same time but the maximum number of points that are $\sigma\sqrt{\tau_\epsilon}$ apart in $B(\phi_{\mu}(\mathbf{s}_{0}), \sigma\sqrt{\tau_\epsilon})$ is at most $\mathcal{P}^D_\tau$. Therefore, we obtain a contradiction.
\end{proof}

\begin{theorem}
An embedding function $\phi:\mathcal{S} \to \mathbb{R}^D \times \mathbb{R}^D_+$ with bounded means (i.e. $\| \phi_\mu(\mathbf{s})\| < \infty$) is $\epsilon$-distinguishable for some $\epsilon \in (0, 1/2)$ if and only if there exists a set of sequences $\{\mathbf{s}_{0},\mathbf{s}_{1},\ldots, \mathbf{s}_{N}\} \subseteq \mathcal{S}$ where each $(\mathbf{s}_0, \mathbf{s}_{i})$ is a positive pair and $(\mathbf{s}_{i}, \mathbf{s}_{j})$ is negative pair satisfying $\phi_\sigma(\mathbf{s}_i)_d < \infty$ for $1 \leq i \leq N$ and $\phi_\sigma(\mathbf{s}_0)_d \to \infty$ for all $d\in [D]$ with $N > P^D_\tau$. 
\end{theorem}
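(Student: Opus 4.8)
The statement is an equivalence, so I would prove the two implications separately, in each case working directly with the renormalized success probability $q(Y_{ij}=1\mid \mathbf{s}_i,\mathbf{s}_j)=\exp\!\big(-\tfrac{1}{4}(\phi_\mu(\mathbf{s}_i)-\phi_\mu(\mathbf{s}_j))^\top(\mathbf{S}_i+\mathbf{S}_j)^{-1}(\phi_\mu(\mathbf{s}_i)-\phi_\mu(\mathbf{s}_j))\big)$ from Eq.~\ref{equ:algebraic_simplification}, together with the distance bounds of Lemma~\ref{lemma:bounds} and the packing estimate of Lemma~\ref{lemma:packing_number}; here $\mathbf{S}_i:=\mathrm{diag}(\phi_\sigma(\mathbf{s}_i))$.

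For the sufficiency direction ($\Leftarrow$) I would start from the given star configuration and verify the two families of constraints of Definition~\ref{def:distinguishable}. For each positive pair $(\mathbf{s}_0,\mathbf{s}_i)$ the quadratic form is bounded above by $\|\phi_\mu(\mathbf{s}_0)-\phi_\mu(\mathbf{s}_i)\|^2/\min_d(\mathbf{S}_0)_d$; since the means are bounded and $\phi_\sigma(\mathbf{s}_0)_d\to\infty$, this ratio tends to $0$ and hence $q(Y_{0i}=1\mid\cdots)\to 1\ge 1-\epsilon$. For the negative pairs $(\mathbf{s}_i,\mathbf{s}_j)$ with $i,j\ge 1$ the leaf variances are finite, so I would place the bounded means pairwise separated by at least $\big(\min_d(\mathbf{S}_i+\mathbf{S}_j)_d\,\log(\epsilon^{-4})\big)^{1/2}$; the upper-bound inequality of Lemma~\ref{lemma:bounds} then gives $q(Y_{ij}=0\mid\cdots)\ge 1-\epsilon$. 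Because the required separation is finite (and can be shrunk by taking the finite leaf variances small), all $N$ leaf means fit inside a bounded region, so every constraint holds simultaneously and $\phi$ is $\epsilon$-distinguishable.

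For the necessity direction ($\Rightarrow$) I would argue by contradiction: assume $\phi$ is $\epsilon$-distinguishable on the star configuration with $N>\mathcal{P}^D_\tau$ leaves of finite variance, yet $\phi_\sigma(\mathbf{s}_0)$ stays bounded. As there are finitely many leaves, all relevant variances are then bounded above and below by positive constants. Applying Lemma~\ref{lemma:bounds} to the positive pairs confines every leaf mean to a ball of finite radius $r$ centered at $\phi_\mu(\mathbf{s}_0)$, while applying it to the negative pairs forces the leaf means to be pairwise separated by at least some $\delta>0$. Exactly as in the proof of Corollary~\ref{corollary:limitations}, the hypothesis $\epsilon<1/2$ yields $\log(\epsilon^{-4})>\log((1-\epsilon)^{-4})$, which lets me compare $r$ and $\delta$ at the common scale $\sigma\sqrt{\tau_\epsilon}$ and conclude that the separation dominates the radius; Lemma~\ref{lemma:packing_number} then caps the number of leaves at $\mathcal{P}^D_\tau$, contradicting $N>\mathcal{P}^D_\tau$. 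Hence $\phi_\sigma(\mathbf{s}_0)$ cannot remain bounded and must diverge.

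The main obstacle is the scale reconciliation in the necessity direction. Once the variances are allowed to vary, the ball radius $r$ (set by the center variance $\mathbf{S}_0$) and the pairwise separation $\delta$ (set by the finite leaf variances) need not coincide, unlike the constant-variance setting of Corollary~\ref{corollary:limitations}, so $\mathcal{P}^D_\tau$ does not apply verbatim. The crux is to show that a bounded center variance together with finite leaf variances keeps the ratio $r/\delta$ below the threshold at which more than $\mathcal{P}^D_\tau$ points can be packed; this is exactly where the ordering $\log(\epsilon^{-4})>\log((1-\epsilon)^{-4})$ from $\epsilon<1/2$ is needed, mirroring the role of $\tau_\epsilon$ in Corollary~\ref{corollary:limitations}.
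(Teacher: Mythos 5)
Your proposal follows essentially the same route as the paper's proof: both directions are run through the closed form for $q$, the distance bounds of Lemma~\ref{lemma:bounds}, and the packing argument of Lemma~\ref{lemma:packing_number} and Corollary~\ref{corollary:limitations}. In the sufficiency direction your argument is the paper's almost verbatim --- the divergence of $\min_d\phi_\sigma(\mathbf{s}_0)_d$ together with bounded means drives the positive-pair quadratic form to zero (the paper makes this quantitative via $1-x\le e^{-x}$), while the negative-pair constraints are discharged by assuming the leaf means are separated at the scale set by their finite variances, exactly as in the paper's choice of $M_\epsilon$ (note that this separation is an extra hypothesis smuggled in by both you and the paper; the theorem statement only constrains the variances).

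The substantive point is the ``scale reconciliation'' obstacle you flag in the necessity direction. You are right that it is the crux, but be aware that neither your sketch nor the paper's proof resolves it. With varying variances, the radius of the ball containing the leaf means is governed by $\max_d\bigl(\phi_\sigma(\mathbf{s}_0)+\phi_\sigma(\mathbf{s}_i)\bigr)_d\log\bigl((1-\epsilon)^{-4}\bigr)$ while the pairwise separation is governed by $\min_d\bigl(\phi_\sigma(\mathbf{s}_i)+\phi_\sigma(\mathbf{s}_j)\bigr)_d\log\bigl(\epsilon^{-4}\bigr)$; the ordering $\log(\epsilon^{-4})>\log((1-\epsilon)^{-4})$ alone does not make the separation dominate the radius unless the ratio of these two variance scales is also controlled, and mere boundedness of $\phi_\sigma(\mathbf{s}_0)$ does not control it --- a large but finite center variance can accommodate far more than $\mathcal{P}^D_\tau$ leaves. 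The paper sidesteps this by asserting a dichotomy (either some leaf variance sum tends to $0^+$ or the center variance diverges) and ruling out the first branch by Corollary~\ref{corollary:limitations}, which is no more rigorous than your sketch. A complete argument would have to bound the number of $\delta$-separated points in a ball of radius $r$ as a function of $r/\delta$ and show that $N>\mathcal{P}^D_\tau$ forces that ratio, hence the center variance, to grow without bound. So: same approach as the paper, same gap --- you have simply localized it more honestly than the paper does.
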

\begin{proof}
Suppose there exists such a set of sequences $\mathbf{s}_{0},\mathbf{s}_{1},\ldots, \mathbf{s}_{N} \in \mathcal{S}$ for $N > \mathcal{P}^D_\tau$. Since each $(\mathbf{s}_0, \mathbf{s}_i)$ is a positive pair for $1\leq i \leq N$, we have
\begin{align}
\log\left( (1-\epsilon)^{-4} \right) &\geq \bigl( \phi_\mu(\mathbf{s}_0) - \phi_\mu(\mathbf{s}_{i}) \bigr)^{\top} \bigl( \mathbf{S}_0 + \mathbf{S}_i \bigr)^{-1} \bigl( \phi_\mu(\mathbf{s}_0) - \phi_\mu(\mathbf{s}_{i}) \bigr)
\end{align}
where $\mathbf{S}_0 := \text{diag}(\phi_\sigma(\mathbf{s}_{0}))$ and $\mathbf{S}_i := \text{diag}(\phi_\sigma(\mathbf{s}_{i}))$. Similarly, for a negative pair $(\mathbf{s}_i,\mathbf{s}_j)$, we can write
\begin{align}
\bigl( \phi_\mu(\mathbf{s}_i) - \phi_\mu(\mathbf{s}_{j}) \bigr)^{\top} \bigl( \mathbf{S}_i + \mathbf{S}_j \bigr)^{-1} \bigl( \phi_\mu(\mathbf{s}_i) - \phi_\mu(\mathbf{s}_{j}) \bigr) &\geq \log\left( \epsilon^{-4} \right) 
\end{align}
for $1 \leq i < j \leq N$. Note that $\log\left( \epsilon^{-4} \right) > \log\left( (1-\epsilon)^{-4} \right)$ for $\epsilon \in (0, 1/2)$ so it implies that 
\begin{align}
\frac{ \bigl( \phi_\mu(\mathbf{s}_0) - \phi_\mu(\mathbf{s}_{i}) \bigr)^{\top} \bigl( \mathbf{S}_0 + \mathbf{S}_i \bigr)^{-1} \bigl( \phi_\mu(\mathbf{s}_0) - \phi_\mu(\mathbf{s}_{i}) \bigr) }{ \bigl( \phi_\mu(\mathbf{s}_i) - \phi_\mu(\mathbf{s}_{j}) \bigr)^{\top} \bigl( \mathbf{S}_i + \mathbf{S}_j \bigr)^{-1} \bigl( \phi_\mu(\mathbf{s}_i) - \phi_\mu(\mathbf{s}_{j}) \bigr) } \leq \frac{ \log\left( (1-\epsilon)^{-4} \right) }{ \log\left( \epsilon^{-4} \right) } \to 0  \text{ \ \ \ as \ \ } \epsilon \to 0
\end{align}
Since the embeddings are bounded, i.e. $\| \phi_\mu(\mathbf{s}) \|_2 < \infty$ $\forall \mathbf{s}\in\mathcal{S}$, then either $(\phi_\sigma(\mathbf{s}_i)_d +\phi_\sigma(\mathbf{s}_j)_d) \to 0^+$ holds for some $d \in [D]$ or $(\phi_\sigma(\mathbf{s}_0)_d +\phi_\sigma(\mathbf{s}_i)_d) \to \infty$ for every $d\in [D]$. The first case cannot happen by Lemma \ref{lemma:packing_number}, and Corollary \ref{corollary:} so $\phi_\sigma(\mathbf{s}_0)_d \to \infty^+$ for every $d\in [D]$.

For the other direction of the statement, assume that $\phi_\sigma(\mathbf{s}_i)_d < \infty$  and $1 \leq i \leq N$ and $\phi_\sigma(\mathbf{s}_0)_d \to \infty$ for every $d\in [D]$. Then, for a given $\epsilon \in (0, 1/2)$, we can find $M_\epsilon$ such that $\min_d\{ (\phi_\sigma(\mathbf{s}_0)_d + \phi_\sigma(\mathbf{s}_i)_d) \} \geq M_\epsilon \geq \frac{1 }{4\epsilon } \| \phi_\sigma(\mathbf{s}_0 - \phi_\sigma(\mathbf{s}_i) \|_2^2$, and it implies that $\epsilon > \frac{1}{4}\frac{ \| \phi_\mu(\mathbf{s}_0) - \phi_\mu(\mathbf{s}_i) \|_2^2 }{ \min_d \{ (\phi_\sigma(\mathbf{s}_0)_d + \phi_\sigma(\mathbf{s}_i)_d) \} } \geq \frac{1}{4} \sum_{d=1}^D \frac{ \left( \phi_\mu(\mathbf{s}_0) - \phi_\mu(\mathbf{s}_i) \right)_d^2 }{ (\phi_\sigma(\mathbf{s}_0) + \phi_\sigma(\mathbf{s}_i))_d }$ so
\begin{align}
1-\epsilon &\leq \left( 1 - \frac{1}{4} \sum_{d=1}^D \frac{ \left( \phi_\mu(\mathbf{s}_0)_d - \phi_\mu(\mathbf{s}_i)_d \right)^2 }{ (\phi_\sigma(\mathbf{s}_0)_d + \phi_\sigma(\mathbf{s}_i)_d) } \right)
\\
&\leq \exp\left( -\frac{1}{4} \sum_{d=1}^D \frac{ \left( \phi_\mu(\mathbf{s}_0)_d - \phi_\mu(\mathbf{s}_i)_d \right)^2 }{ (\phi_\sigma(\mathbf{s}_0)_d + \phi_\sigma(\mathbf{s}_i)_d) } \right)
\\
&= \exp\left( - \frac{1}{4}\left( \phi_\mu(\mathbf{s}_0) - \phi_\mu(\mathbf{s}_i) \right)^\top \left( \mathbf{S}_0 + \mathbf{S}_i \right)^{-1} \left( \phi_\mu(\mathbf{s}_0) - \phi_\mu(\mathbf{s}_i) \right ) \right)
\\
&= \beta \mathbb{E}_{\substack{\mathbf{z}_i \sim \mathcal{N}\left(\mathbf{\mu}_i, \mathbf{S}_i \right) \\ \mathbf{z}_j \sim \mathcal{N}\left(\mathbf{\mu}_j, \mathbf{S}_j \right)}} \left[ p(y_{(i,j)}=1 \mid \cdots ) \right]\\
&= q(Y_{ij}=1\mid \cdots)
\end{align}
and the second line follows from the inequality $1-x \leq \exp(-x)$  for all $x\in \mathbb{R}$.

For negative pairs, similarly, let $M_{\epsilon} := \max_d\{ \phi_\sigma(\mathbf{s}_i)_d + \phi_\sigma(\mathbf{s}_j)_d \} \leq \frac{D}{4\log(1/\epsilon)} \left\| \phi_\mu(\mathbf{s}_i) - \phi_\mu(\mathbf{s}_j) \right\|^2_2$ for all $1 \leq i < j \leq N$. 
\begin{align}
1-\epsilon &= 1 - \exp(-\log(1/\epsilon))
\\
&= 1 - \exp\left( -\frac{1}{4} \sum_{d=1}^D \frac{4\log(1/\epsilon)}{D} \right)
\\
&\leq 1 - \exp\left( -\frac{1}{4} \frac{1}{\max_d\{  \phi_\sigma(\mathbf{s}_i)_d + \phi_\sigma(\mathbf{s}_j)_d \} } \left\| \phi_\mu(\mathbf{s}_i)_d - \phi_\mu(\mathbf{s}_j)_d \right\|_2^2 \right)
\\
&\leq 1 - \exp\left( -\frac{1}{4} \sum_{d=1}^D \frac{ \left( \phi_\mu(\mathbf{s}_i)_d - \phi_\mu(\mathbf{s}_j)_d \right)^2 }{ (\phi_\sigma(\mathbf{s}_i)_d + \phi_\sigma(\mathbf{s}_j)_d) } \right)
\\
&=1 - \exp\left( - \frac{1}{4}\left( \phi_\mu(\mathbf{s}_i) - \phi_\mu(\mathbf{s}_j) \right)^\top \left( \mathbf{S}_i + \mathbf{S}_j \right)^{-1} \left( \phi_\mu(\mathbf{s}_i) - \phi_\mu(\mathbf{s}_j) \right ) \right)
\\
&=  1 -  \beta \mathbb{E}_{\substack{\mathbf{z}_i \sim \mathcal{N}\left(\mathbf{\mu}_i, \mathbf{S}_i \right) \\ \mathbf{z}_j \sim \mathcal{N}\left(\mathbf{\mu}_j, \mathbf{S}_j \right)}} \left[ p(y_{(i,j)}=1 \mid \cdots ) \right]
\\
&= q(Y_{ij}=0\mid \cdots)\end{align}
Therefore, the function $\phi :\mathcal{S} \to \mathbb{R}^D \times \mathbb{R}^D_+$ is an $\epsilon$-distinguishable embedding function.

\end{proof}

\section{Experiments}\label{appendix:experiments}

\textbf{Datasets.} For our experiments, we adopt the benchmark datasets introduced in prior work on the metagenomic binning task \citep{zhou2024dnabert}. The datasets are constructed from reference genomes in GenBank and consist of viral, fungal, and bacterial sequences. The training data contains more than $2$ million sequence pairs of length $1000$bp. For testing, we have six datasets (\textsl{Reference 5/6}, \textsl{Plant 5/6}, and \textsl{Marine 5/6}) with species represented by highly variable numbers of sequences ($10$–$4,599$), ranging from $2$-$20$ kbp in length. While \textsl{Reference} datasets consist of DNA fragments from $250$-$330$ fungal and viral genomes, and \textsl{Marine} and \textsl{Plant}-associated environments contain $70$k-$125$k sequences from roughly $180$-$520$ species.

\textbf{Baselines.} \textsc{Kmer(cosine)} and \textsc{Kmer($\ell_1$)} are the classical representations based on $4$-mer frequencies, where sequence similarity is computed using either cosine similarity or an exponential kernel over the $\ell_1$ distance. \textsc{HyenaDNA} \citep{nguyen2023hyenadna} is a genome foundation model, operating at single-nucleotide resolution with context lengths up to $10^6$ to efficiently capture long-range dependencies beyond the reach of standard transformers. \textsc{DNABERT-2} \citep{zhou2023dnabert} is also a foundation model that replaces fixed $k$-mer tokenization with Byte Pair Encoding (BPE) to improve modeling efficiency. \textsc{DNABERT-S} \citep{zhou2024dnabert} leverages \textsc{DNABERT-2} as a pretrained backbone, fine-tuned with contrastive objectives tailored to metagenomic binning. \textsc{RevistKmers} \citep{ccelikkanat2024revisiting} is a lightweight model that learns sequence embeddings via a two-layer neural network applied to $4$-mer profiles. It provides a strong deterministic baseline and can be viewed as the non-probabilistic counterpart of our approach. \textsc{RevistKmers} is thus also the main baseline in the experimental setup as the proposed model shares the lightweight characteristics of \textsc{RevistKmers}.

\subsection{Toy Example with $k$-Mer Dataset}\label{appendix:toy_example}

To evaluate the behavior of our model in a controlled setting, we designed a synthetic toy dataset of $k$-mer sequences. The goal of this study is twofold: (i) to verify that the model can learn meaningful low-dimensional embeddings that reflect the underlying class structure, and (ii) to assess how the model represents sequences belonging to multiple classes with their mean and variance terms.

The dataset consists of sequences generated from multinomial distributions with a clear class structure, enabling us to systematically analyze the effect of sequence overlap and class separability on the learned embeddings.

\begin{figure}[t]
     \centering
     \begin{subfigure}[b]{0.32\textwidth}
         \centering
         \includegraphics[width=\textwidth]{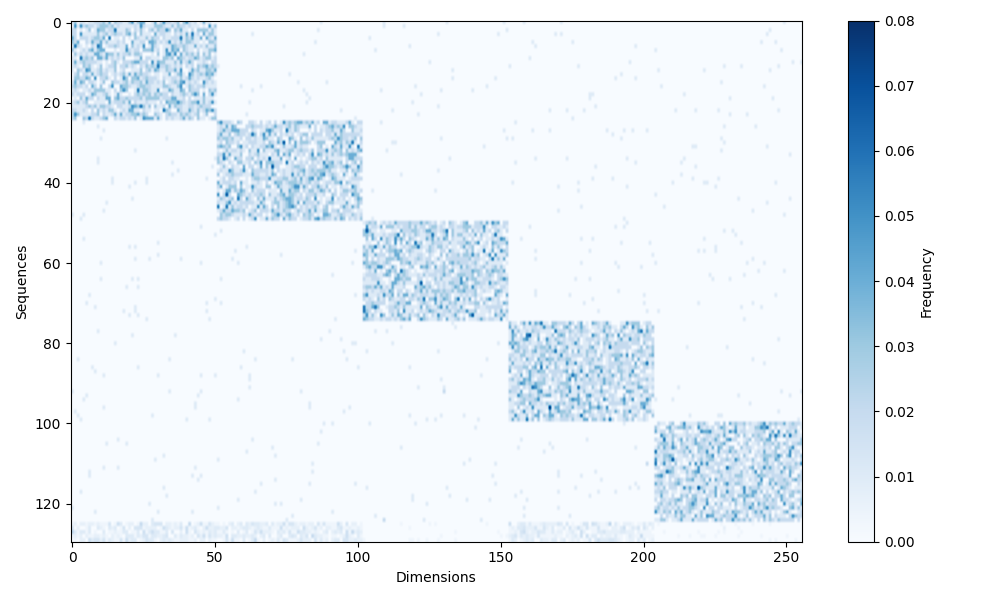}
         \caption{$k$-mer Frequencies}
         \label{fig:appendix_toy_kmer_freqs}
     \end{subfigure}
     \hfill
     \begin{subfigure}[b]{0.32\textwidth}
         \centering
         \includegraphics[width=\textwidth]{figures/embedding_space.png}
         \caption{Learned Embeddings}
         \label{fig:appendix_toy_example_embs}
     \end{subfigure}
     \hfill
     \begin{subfigure}[b]{0.32\textwidth}
         \centering
         \includegraphics[width=\textwidth]{figures/variance_distribution.png}
         \caption{Variance Distribution}
         \label{fig:appendix_toy_example_var_distr}
     \end{subfigure}
     \caption{Visualization of the input $k$-mer features with learned embeddings and the variances.}
     \label{fig:appendix_toy_example}
\end{figure}

\textbf{Data Generation.} We generated sequences of length $100$ composed of $4$-mers, resulting in $256$ possible $k$-mer types. The sequences were then divided into $5$ classes, each containing $25$ sequences. For each class, a multinomial distribution was defined such that the probability mass was concentrated on a distinct subset of k-mer dimensions, with a small uniform smoothing $10^{-2}$ to avoid zero probabilities. It ensured that sequences within the same class had similar $k$-mer compositions, while sequences from different classes were distinguishable (Figure \ref{fig:appendix_toy_kmer_freqs}). To further test the model's ability to handle ambiguity, we introduced $5$ “multi-class” sequences sampled by combining $k$-mer counts from multiple classes. This design simulates sequences that do not belong exclusively to a single class and allows us to probe how the model handles overlapping class structures.

\textbf{Pair construction.} We constructed positive sequence pairs by sampling two sequences from the same class, effectively assuming full access to positive examples (i.e., $y_{ij}=1$). Negative pairs were generated using the same random sampling strategy described in Section \ref{sec:experiments}, which naturally introduces the possibility of false negatives. For the multi-class sequences, we additionally constructed pairs with sequences from their contributing classes, enabling the model to learn representations that account for both pure and mixed class memberships.

\textbf{Results.} 
We learn sequence embeddings directly in a $2$-dimensional latent space (Figure~\ref{fig:appendix_toy_example_embs}), avoiding any dimensionality reduction steps that could distort the geometric structure of the embedding space. Visualization of the learned embeddings reveals that sequences from distinct classes form well-separated groups, while sequences belonging to multiple classes occupy intermediate regions. Importantly, our model also predicts a diagonal covariance matrix for each sequence, capturing the uncertainty associated with sequences that span multiple clusters.

By Lemma \ref{lemma:bounds} and Theorem \ref{thm:theorem}, the minimum variance across dimensions provides a lower bound on pairwise distances. Therefore, we expect multi-class sequences to exhibit larger variance values. Figure \ref{fig:appendix_toy_example_var_distr} illustrates the distribution of $\min_d \{ \phi_\sigma(\mathbf{s})_d \}$ for each sequence $\mathbf{s} \in \mathcal{S}$. As predicted by our theoretical analysis, sequences associated with multiple classes indeed show larger minimum variances across dimensions, reflecting their position between clusters in the latent space.

\subsection{Ablation Studies}\label{appendix:ablation_studies}

\begin{figure}[]
 \centering
 \includegraphics[width=\textwidth]{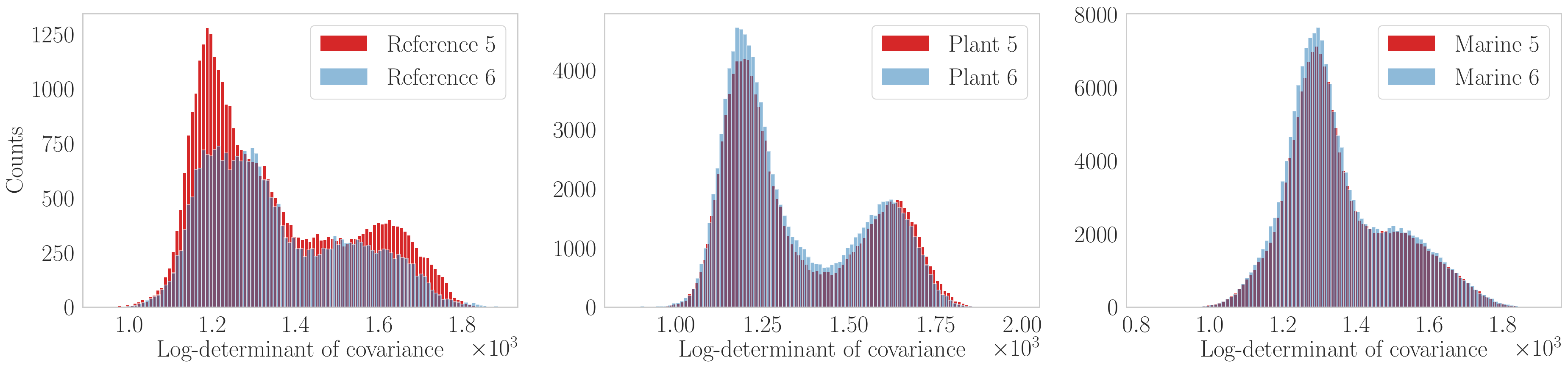}
\caption{Distribution of the log-determinant of covariance matrices across datasets.}
\label{fig:appendix_log_determinant_distribution}
\end{figure}

\textbf{Distribution of variances.} We begin our uncertainty analysis by inspecting the distribution of the predicted variance terms. Recall that for a given sequence $\mathbf{s}\in \mathcal{S}$, the model produces dimension-wise variance estimates $\phi_\sigma(\mathbf{s}) \in \mathbb{R}^D_{\geq 0}$ corresponding to the diagonal entries of the covariance matrix. We compute
\begin{align}\label{eq:log_determinant}
    u(\mathbf{s}) = \sum_{d=1}^D \log \big(\phi_\sigma(\mathbf{s})_d + 1\big),
\end{align}
which is in fact the log-determinant of the covariance matrix, and we have the $+1$ term in order to ensure numerical stability. Therefore, it captures the sequence-level dispersion in the embedding space.

Figure \ref{fig:appendix_log_determinant_distribution} shows the empirical distribution of $u(\mathbf{s})$ over sequences in the testing datasets. The multimodal distributions of $u(\mathbf{s})$ are very clear for \textsl{Reference 5/6} and \textsl{Plant 5/6} datasets.  This indicates that the model partitions the sequence space into distinct regimes of predictive uncertainty, which might point out the two distinct sets of species (Please see Section \ref{sec:experiments} for the dataset details.). Importantly, the observed distributions are neither degenerate (collapsed near zero) nor uniform. Instead, they encode structured variability that reflects properties of the underlying data distribution. This finding supports the hypothesis that the variance terms carry semantically meaningful information rather than merely acting as nuisance parameters. Hence, these results establish that our model produces non-trivial and interpretable uncertainty estimates.

\begin{figure}[]
 \centering
 \includegraphics[width=\textwidth]{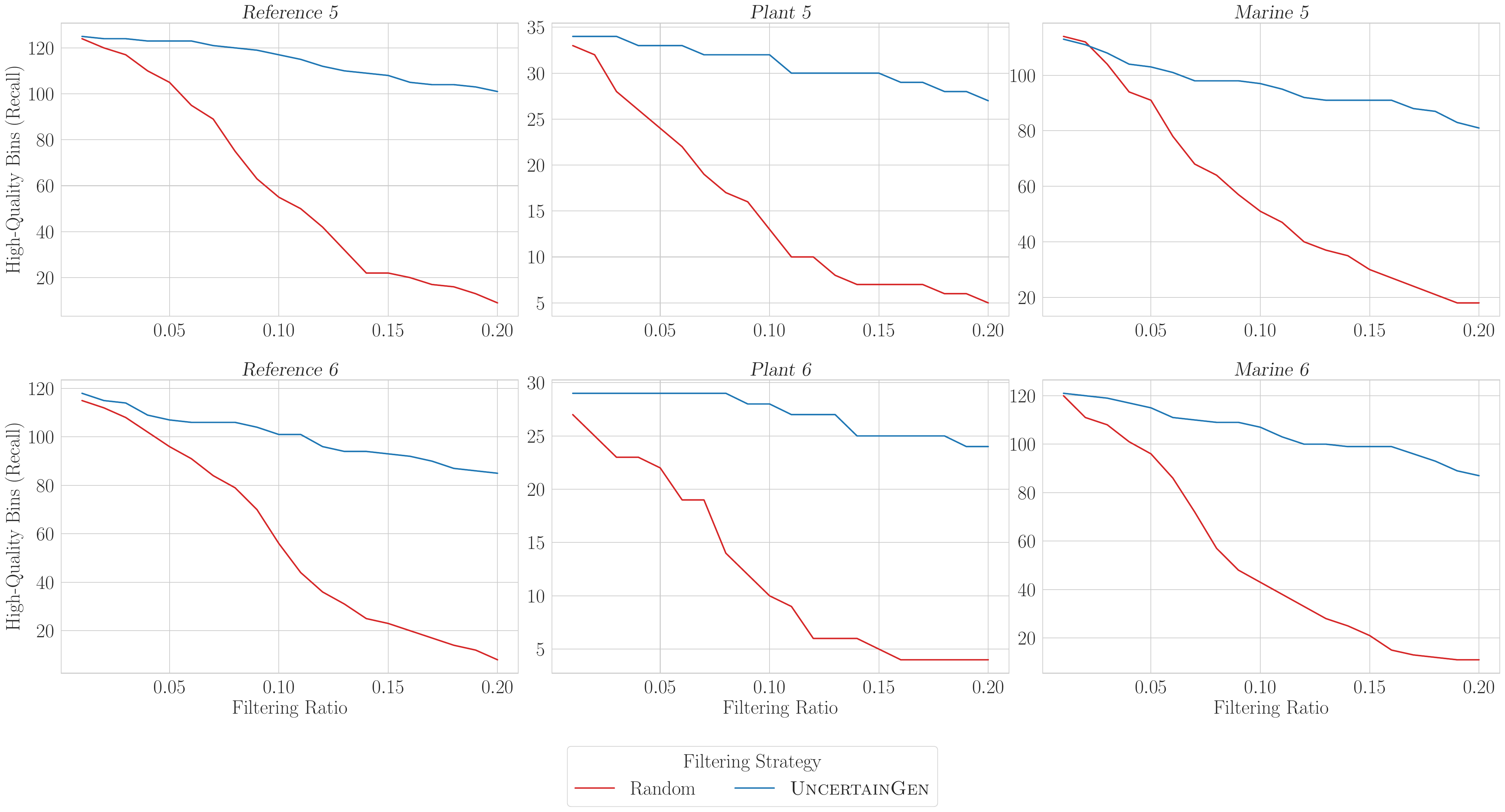}
\caption{Filtering sequences with varying ratios over all testing datasets.}
 \label{fig:appendix_filtering}
\end{figure}

\textbf{Sequence Filtering Across Datasets.} To further probe the role of predictive uncertainty, we conduct a sequence filtering experiment in which we selectively remove sequences with the largest variance scores. Specifically, we sort sequences by their aggregated log-determinant values, $u(\mathbf{s})$ (as defined in Eq. \ref{eq:log_determinant},) and iteratively filter out the highest-uncertainty items. After filtering, we evaluate cluster quality by reporting the number of clusters that achieve a recall score $\geq 0.9$. For comparison, we include a \emph{random filtering} baseline in which the same number of sequences is removed uniformly at random. To ensure a fair comparison, all removed sequences are assigned to a dedicated "garbage" cluster so that they do not artificially inflate false negatives.

Figure \ref{fig:appendix_filtering} summarizes results for all the benchmarks, and we observe the consistent trends across all datasets. Filtering by predictive uncertainty consistently yields a larger number of clusters with recall $\geq 0.9$ compared to the random baseline. This suggests that uncertainty-guided filtering preserves the integrity of high-quality clusters while selectively removing sequences that would otherwise degrade cluster purity. The sequences assigned the highest variance values might typically originate from low-quality or noisy bins. These sequences might also tend to coincide with cases that contribute to false negatives in the unfiltered setting. By removing them, the model effectively reduces noise in the evaluation and highlights clusters that better reflect true structure in the data. Therefore, the model’s variance estimates can serve as a practical mechanism for uncertainty-aware sequence selection and downstream decision making.

\begin{figure}[]
 \centering
 \includegraphics[width=\textwidth]{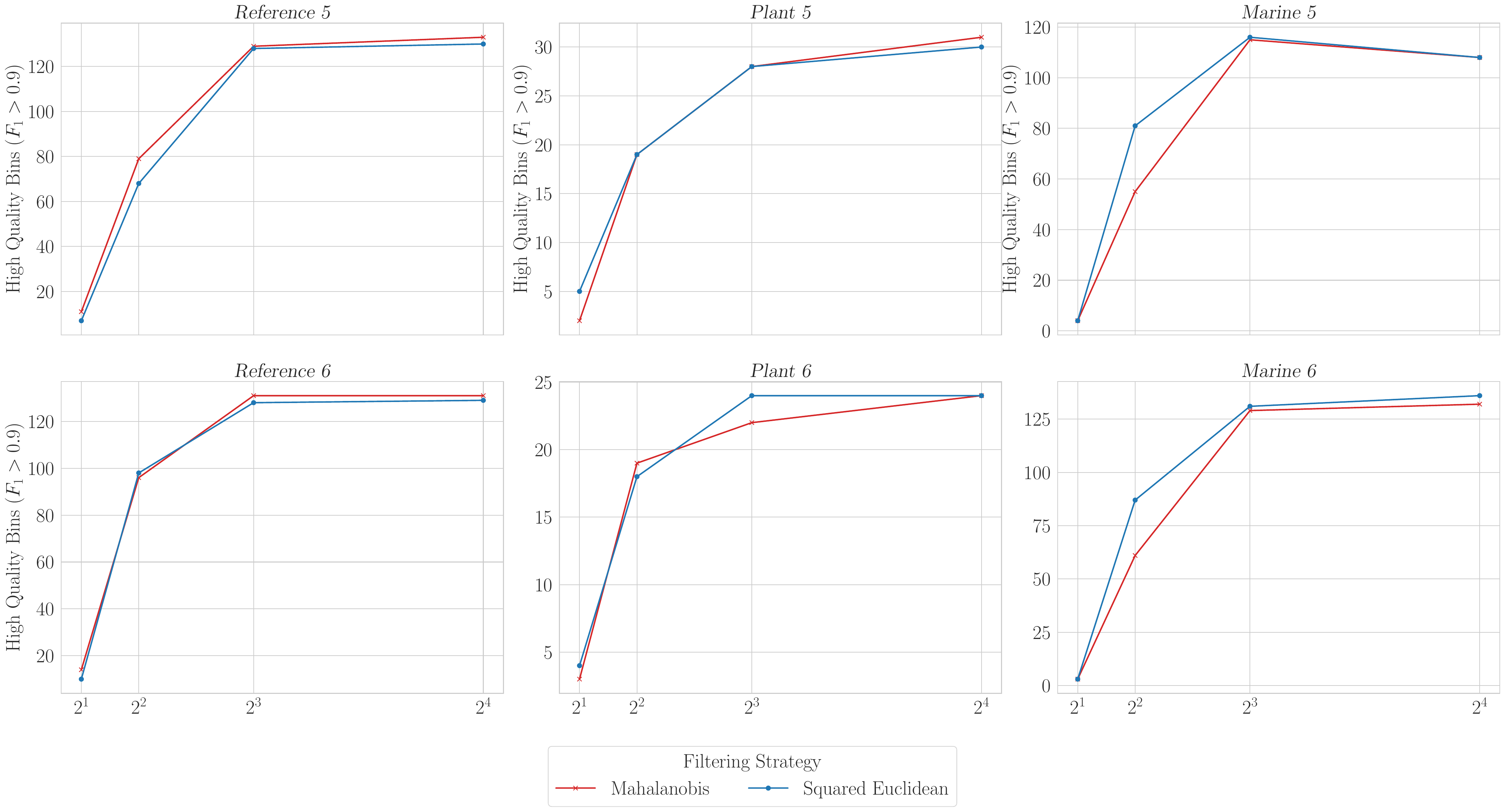}
\caption{Impact of dimension size for different metrics.}
 \label{fig:appendix_dimension_impact}
\end{figure}

\textbf{Effect of embedding dimension size.} As discussed in our theoretical analysis (Section \ref{sec:model}), incorporating covariance terms provides the model with additional representational capacity compared to squared Euclidean distance: beyond capturing predictive uncertainty, the embedding space approximates a non-Euclidean Riemannian manifold. This enriched geometry has the potential to separate complex sequence structures more effectively, particularly when the embedding dimension is small and representational bottlenecks are most restrictive. 

Figure \ref{fig:appendix_dimension_impact}) demonstrates this effect on the \textsl{Reference} dataset very clearly, where covariance-aware embeddings consistently outperformed the Euclidean baseline. The performance gains were especially pronounced in low-dimensional regimes, aligning well with our theoretical motivation. For the \textsl{Plant} and \textsl{Marine} datasets, the magnitude of the improvement is marginal and approaches the range of experimental noise. Hence, these results suggest that while covariance terms indeed enrich the representational geometry in a theoretically appealing way and can yield measurable improvements in practice, the empirical benefits are not universal across datasets. Instead, the extent of improvement appears to be dataset-dependent, reflecting differences in the underlying structure and complexity of the sequences.

\end{document}